\documentclass{article}

\usepackage[preprint]{unireps_2025}

\usepackage[utf8]{inputenc} 
\usepackage[T1]{fontenc}    
\usepackage{hyperref}       
\usepackage{url}            
\usepackage{booktabs}       
\usepackage{amsfonts}       
\usepackage{nicefrac}       
\usepackage{microtype}      
\usepackage{xcolor}         
\usepackage{amsmath}
\usepackage{graphicx}
\usepackage{subcaption}
\usepackage{cleveref}
\usepackage{amsthm} 
\usepackage{amssymb}
\usepackage{wrapfig} 
\usepackage{enumitem}
\usepackage{float}  
\usepackage{placeins}

\newtheorem{theorem}{Theorem}

\newtheorem{lemma}[theorem]{Lemma}

\newtheorem{proposition}[theorem]{Proposition}
\crefname{proposition}{proposition}{propositions}

\title{Radial-VCReg: More Informative Representation Learning Through Radial Gaussianization}

\author{%
  Yilun Kuang\thanks{Corresponding authors. Email: \texttt{yilun.kuang@nyu.edu}, \texttt{yann.lecun@nyu.edu}.} \\
  New York University\\
  \And
  Yash Dagade \\
  Duke University \\
  \And
  Deep Chakraborty \\
  University of Massachusetts Amherst \\
  \And 
  Erik Learned-Miller \\
  University of Massachusetts Amherst \\
  \And
  Randall Balestriero \\
  Brown University \\
  \And
  Tim G. J. Rudner \\
  University of Toronto \\
  \And 
  Yann LeCun\footnotemark[\value{footnote}] \\
  New York University \\
}

\begin{document}

\maketitle

\begin{abstract}
  Self-supervised learning aims to learn maximally informative representations, but explicit information maximization is hindered by the curse of dimensionality. Existing methods like VCReg address this by regularizing first and second-order feature statistics, which cannot fully achieve maximum entropy. We propose Radial-VCReg, which augments VCReg with a radial Gaussianization loss that aligns feature norms with the Chi distribution—a defining property of high-dimensional Gaussians. We prove that Radial-VCReg transforms a broader class of distributions towards normality compared to VCReg and show on synthetic and real-world datasets that it consistently improves performance by reducing higher-order dependencies and promoting more diverse and informative representations.
\end{abstract}

\section{Introduction}

Self-supervised learning leverages unlabeled data to create useful representations for downstream tasks \citep{radford2018improving,chen2020simpleframeworkcontrastivelearning}. Many methods are based on the InfoMax principle, which aims to maximize mutual information between different views of the same input \citep{hjelm2019learningdeeprepresentationsmutual,ozsoy2022selfsupervisedlearninginformationmaximization}. This requires both enforcing agreement across views and preserving feature diversity to prevent collapse—the latter being more challenging.

Non-contrastive self-supervised learning methods like the VCReg component of VICReg \citep{bardes2022vicregvarianceinvariancecovarianceregularizationselfsupervised} address this by regularizing the covariance of features \citep{zbontar2021barlowtwinsselfsupervisedlearning,pmlr-v139-ermolov21a,bardes2022vicregvarianceinvariancecovarianceregularizationselfsupervised}. While effective in practice \citep{sobal2025learning}, covariance regularization only removes linear dependencies and cannot fully maximize information. 

In this paper, we aim to optimize the InfoMax objective by \emph{Gaussianizing} feature representations. The Gaussian distribution is the maximum entropy distribution for a given mean and variance \citep{cover1991elements}, encouraging features to be maximally spread out and resistant to collapse. Unfortunately, directly matching the feature distribution to a high-dimensional Gaussian suffers from the curse of dimensionality. Previous methods such as E2MC circumvent this by maximizing entropy per feature dimension along with whitening \citep{chakraborty2025improvingpretrainedselfsupervisedembeddings}. However, there exist distributions that minimize the E2MC loss but do not maximize entropy. See Figure~\ref{fig:syntheticsubfig1} for example. 

\begin{figure*}[t!]
    \centering
    \begin{subfigure}[t]{0.3\linewidth}
        \centering
        \includegraphics[width=\linewidth]{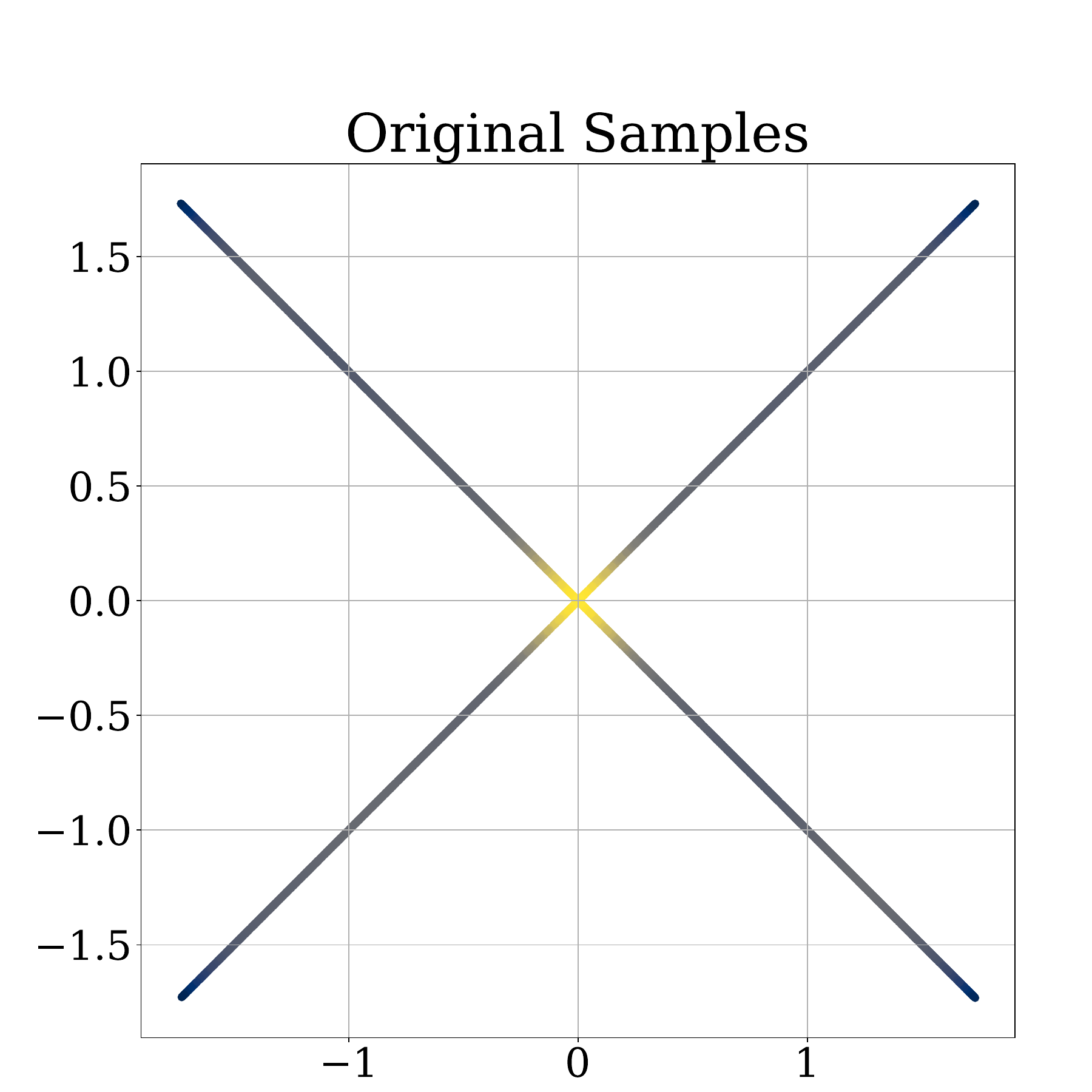}
        \captionsetup{justification=centering}
        \caption{Samples from the $\mathrm{X}$-Distribution.}
        \label{fig:syntheticsubfig1}
    \end{subfigure}
    \hfill
    \begin{subfigure}[t]{0.3\linewidth}
        \centering
        \includegraphics[width=\linewidth]{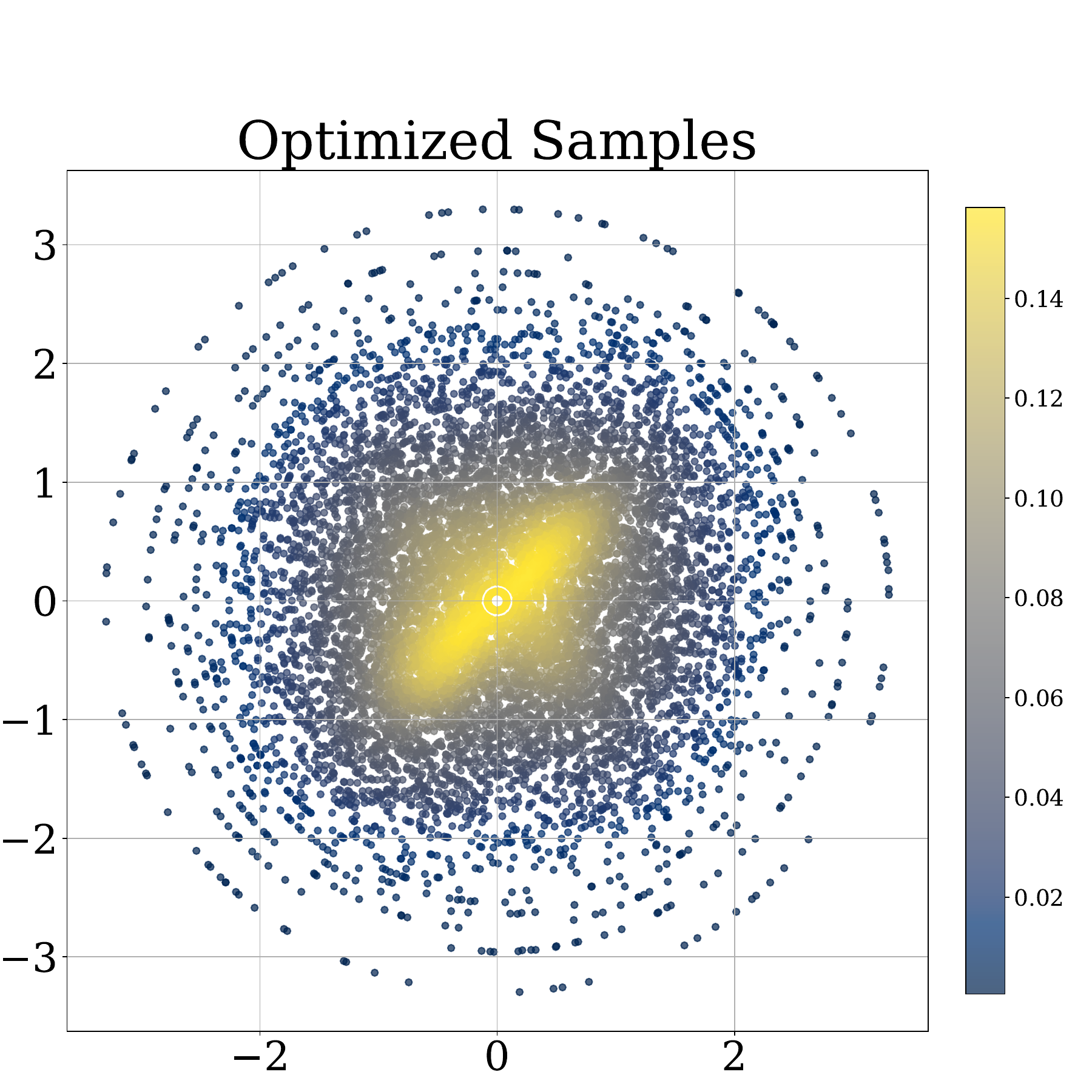}
        \captionsetup{justification=centering}
        \caption{Samples optimized with the Radial-VCReg objective.}
        \label{fig:syntheticsubfig2}
    \end{subfigure}
    \hfill
    \begin{subfigure}[t]{0.3\linewidth}
        \centering
        \includegraphics[width=\linewidth]{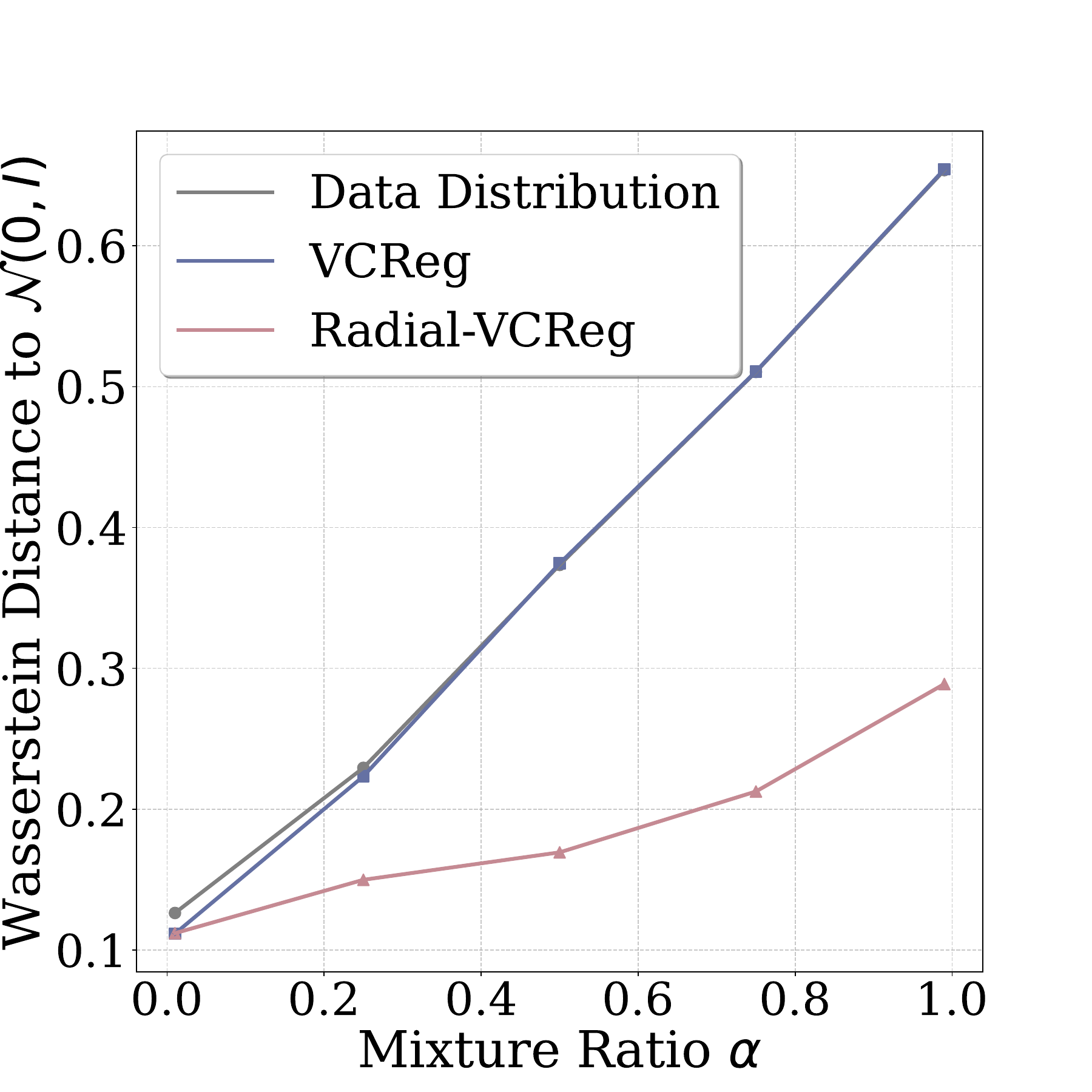}
        \captionsetup{justification=centering}
        \caption{Wasserstein Distance between optimized samples and $\mathcal{N}(\mathbf{0},\mathbf{I})$.}
        \label{fig:syntheticsubfig3}
    \end{subfigure}
    \caption{\textbf{The Radial-VCReg objective more effectively pushes samples from a non-elliptically symmetric $\mathrm{X}$-distribution towards the standard normal distribution in 2D compared to the VCReg objective.} (a) The $\mathrm{X}$-distribution has an identity covariance matrix, but it is not elliptically symmetric. (b) Samples from the $\mathrm{X}$-distribution are optimized with the Radial-VCReg loss, yielding a spherical structure. (c) As the ratio $\alpha$ of samples from the $\mathrm{X}$-distribution increases, samples optimized with the Radial-VCReg loss are closer to the standard normal compared to that of VCReg. The VCReg objective is also unable to move the samples away from their starting distributions.}
    \label{fig:synresultfig}
\end{figure*}

In this work, we propose to Gaussianize our features radially. A $d$-dimensional isotropic Gaussian concentrates on a thin shell of radius $\sqrt{d}$ with an $O(1)$ width, whose marginal follows a Chi distribution \citep{vershynin2018high}. Enforcing this radial property with whitening provides sufficient conditions for Gaussianity if the underlying distribution is elliptically symmetric \citep{Lyu08c,NIPS2008_da4fb5c6}.

Inspired by this observation, we explore to what extent we can obtain Gaussian features in self-supervised learning by enforcing a chi-distribution on the radial marginal of neural network features to maximize information.
To summarize, our main contributions are as follows:\vspace*{-5pt}
\begin{enumerate}[leftmargin=15pt]
\setlength\itemsep{0pt}
\item We propose the Radial Gaussianization loss, a consistent estimator of the Kullback–Leibler divergence between the empirical radius distribution and the ground-truth chi-distribution.
\item We introduce Radial-VCReg, a self-supervised method that extends VCReg by explicitly regularizing radial distributions, with theoretical guarantees of transforming a broader class of feature distributions toward normality.
\item We demonstrate empirically that Radial-VCReg 1) pushes the sample distributions closer to the standard normal compared to VCReg in synthetic settings, even in cases where the underlying distribution might not be elliptically symmetric, and 2) achieves consistent gains on real-world image datasets over VCReg.
\end{enumerate}

Our codes are available at \href{https://github.com/YilunKuang/RadialVCReg}{https://github.com/YilunKuang/RadialVCReg}

\section{Radial Gaussianization}\label{sec:radgaussmethod}

In the following section, we show how to incorporate radial Gaussianization into an optimization objective for self-supervised learning. Additional background can be found in Appendix~\ref{appendix:additionalbackground}. 

\subsection{Self-Supervised Learning} 

In self-supervised learning, we are given unlabeled samples $\mathbf{X}=[\mathbf{x}_1,\cdots,\mathbf{x}_N]$ drawn from a data distribution $p_{X}$, where $\mathbf{x}_i\in\mathbb{R}^{d_{\text{in}}}$ and $\mathbf{X}\in\mathbb{R}^{N\times d_{\text{in}}}$. During training, we sample transformations $t,t'\sim\mathcal{T}$ and apply them to the original samples to create two sets of transformed samples, $\mathbf{X}_{\text{aug}}=[t(\mathbf{x}_1),\cdots,t(\mathbf{x}_N)]$ and $\mathbf{X'}_{\text{aug}}=[t'(\mathbf{x}_1),\cdots,t'(\mathbf{x}_N)]$, which form positive pairs. The goal is to train a neural network $h_{\boldsymbol{\theta}}$ to learn representations such that the resulting positive pairs, $\smash{\mathbf{Z}=[h_{\boldsymbol{\theta}}(t(\mathbf{x}_1)),\cdots,h_{\boldsymbol{\theta}}(t(\mathbf{x}_N))]}$ and $\smash{\mathbf{Z'}=[h_{\boldsymbol{\theta}}(t'(\mathbf{x}_1)),\cdots,h_{\boldsymbol{\theta}}(t'(\mathbf{x}_N))]}$, are close according to a specified distance metric. Simultaneously, the output features $\mathbf{z}_i,\mathbf{z'}_i\in\mathbb{R}^{d_{\text{out}}}$ must remain diverse and informative, avoiding representational collapse.

\begin{figure*}[t]
    \centering
    \begin{subfigure}[t]{0.24\linewidth}
        \centering
        \includegraphics[width=\linewidth]{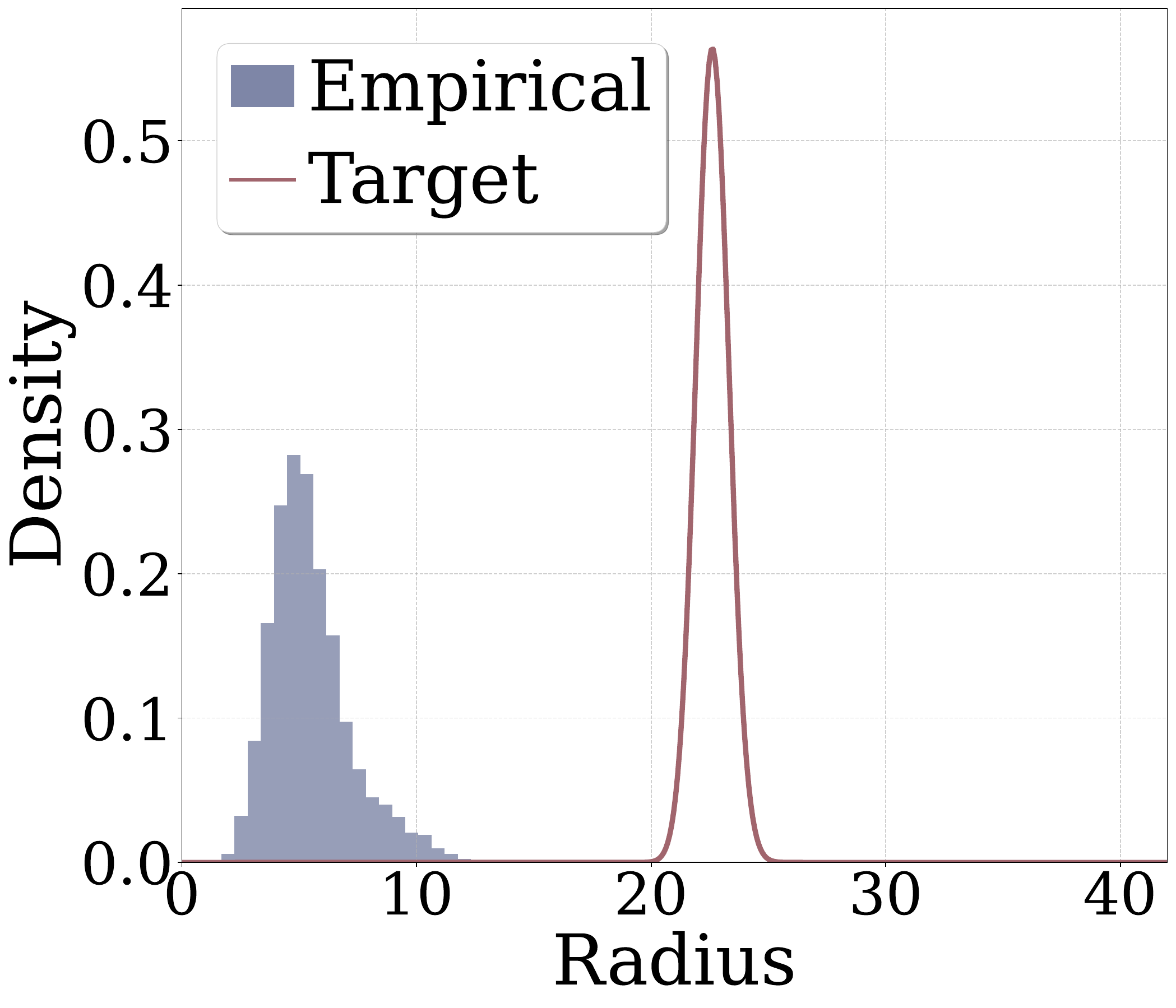}
        \captionsetup{justification=centering}
        \caption{Initial distribution; $W_1$ dist to $\chi=17.15$}
        \label{fig:emp_initial}
    \end{subfigure}
    \hfill
    \begin{subfigure}[t]{0.24\linewidth}
        \centering
        \includegraphics[width=\linewidth]{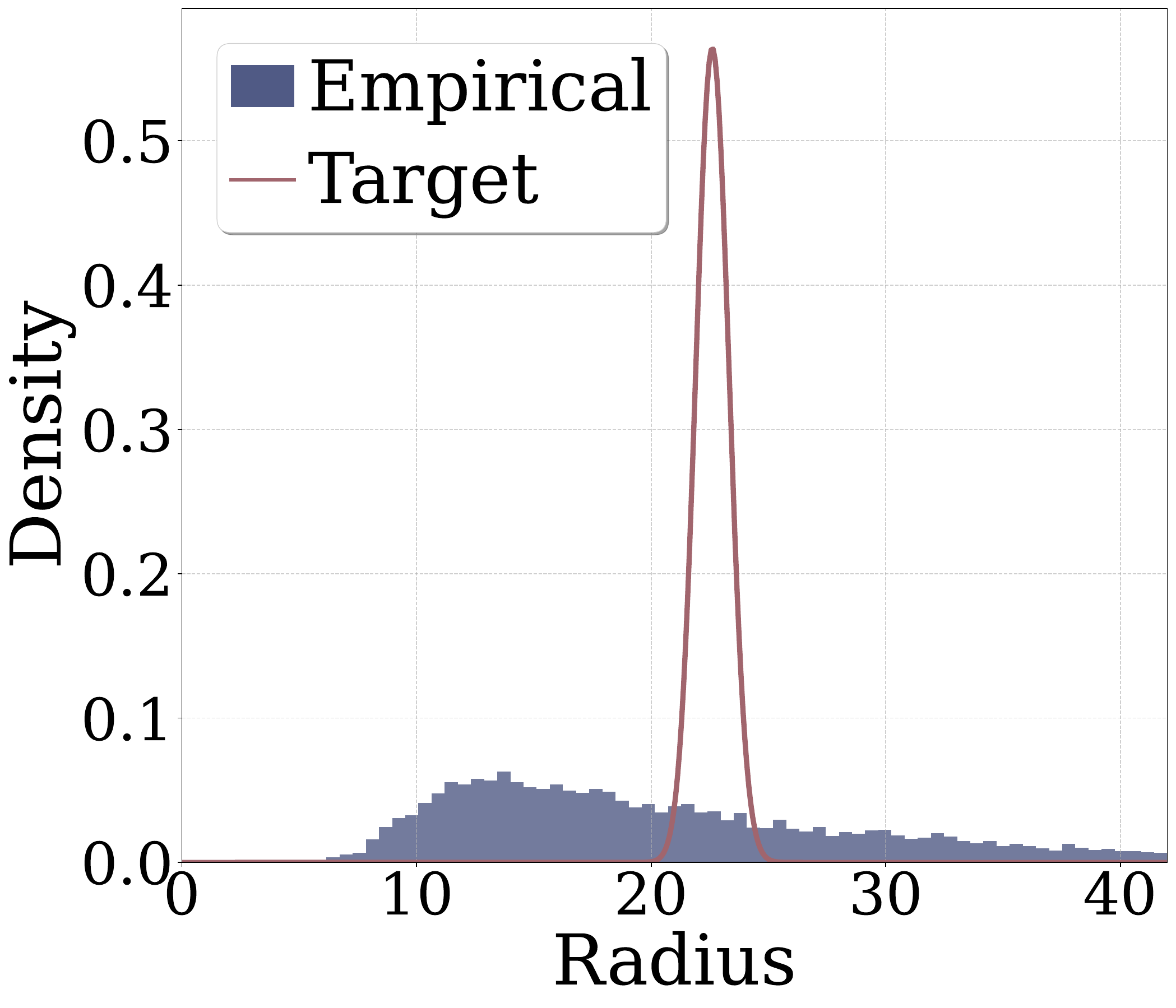}
        \captionsetup{justification=centering}
        \caption{Learned representation with VICReg; $W_1$ dist to $\chi=8.17$}
        \label{fig:emp_vicreg}
    \end{subfigure}
    \hfill
    \begin{subfigure}[t]{0.24\linewidth}
        \centering
        \includegraphics[width=\linewidth]{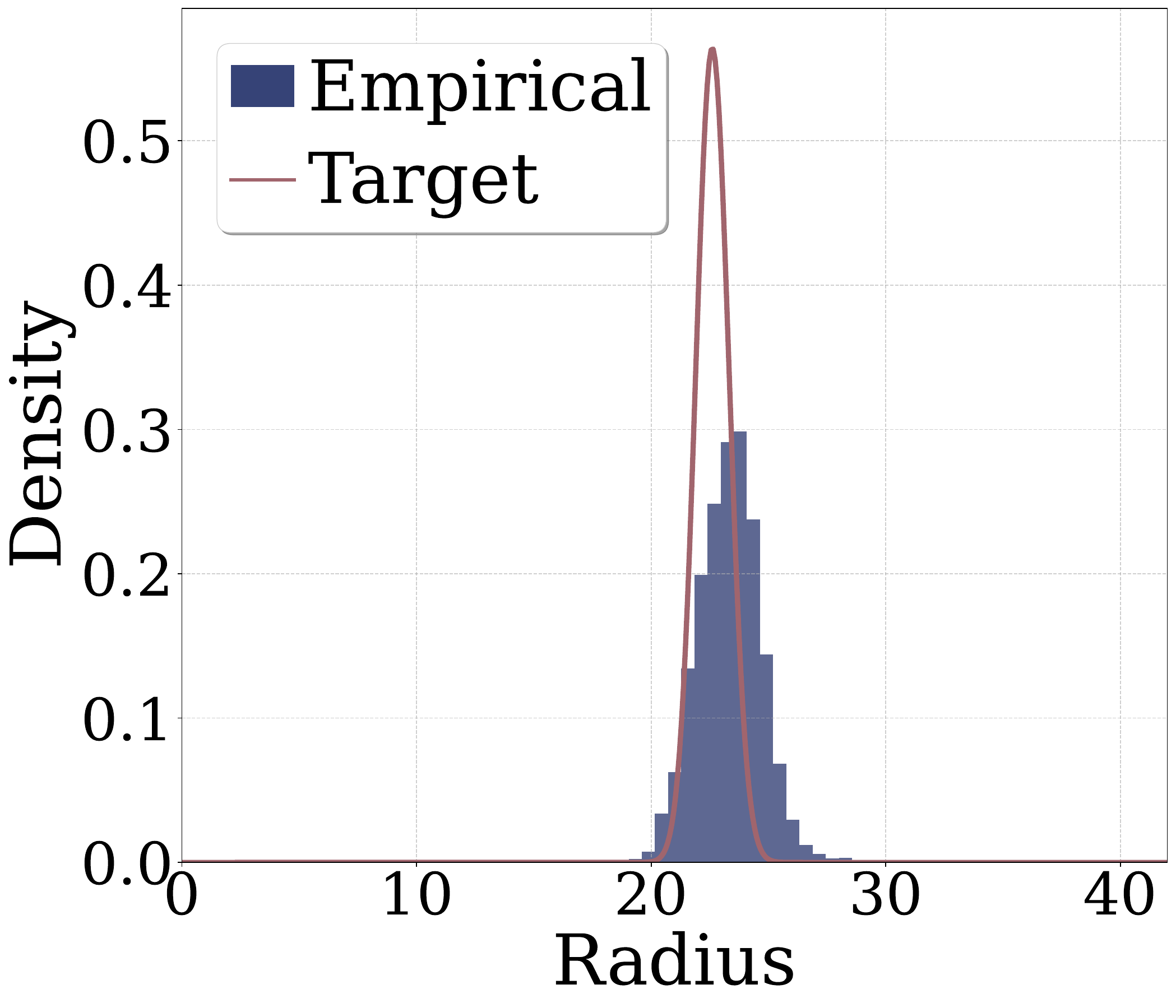}
        \captionsetup{justification=centering}
        \caption{Learned representation with Radial-VICReg; $W_1$ dist to $\chi=0.79$}
        \label{fig:emp_Radial-VICReg}
    \end{subfigure}
    \hfill
    \begin{subfigure}[t]{0.24\linewidth}
        \centering
        \includegraphics[width=\linewidth]{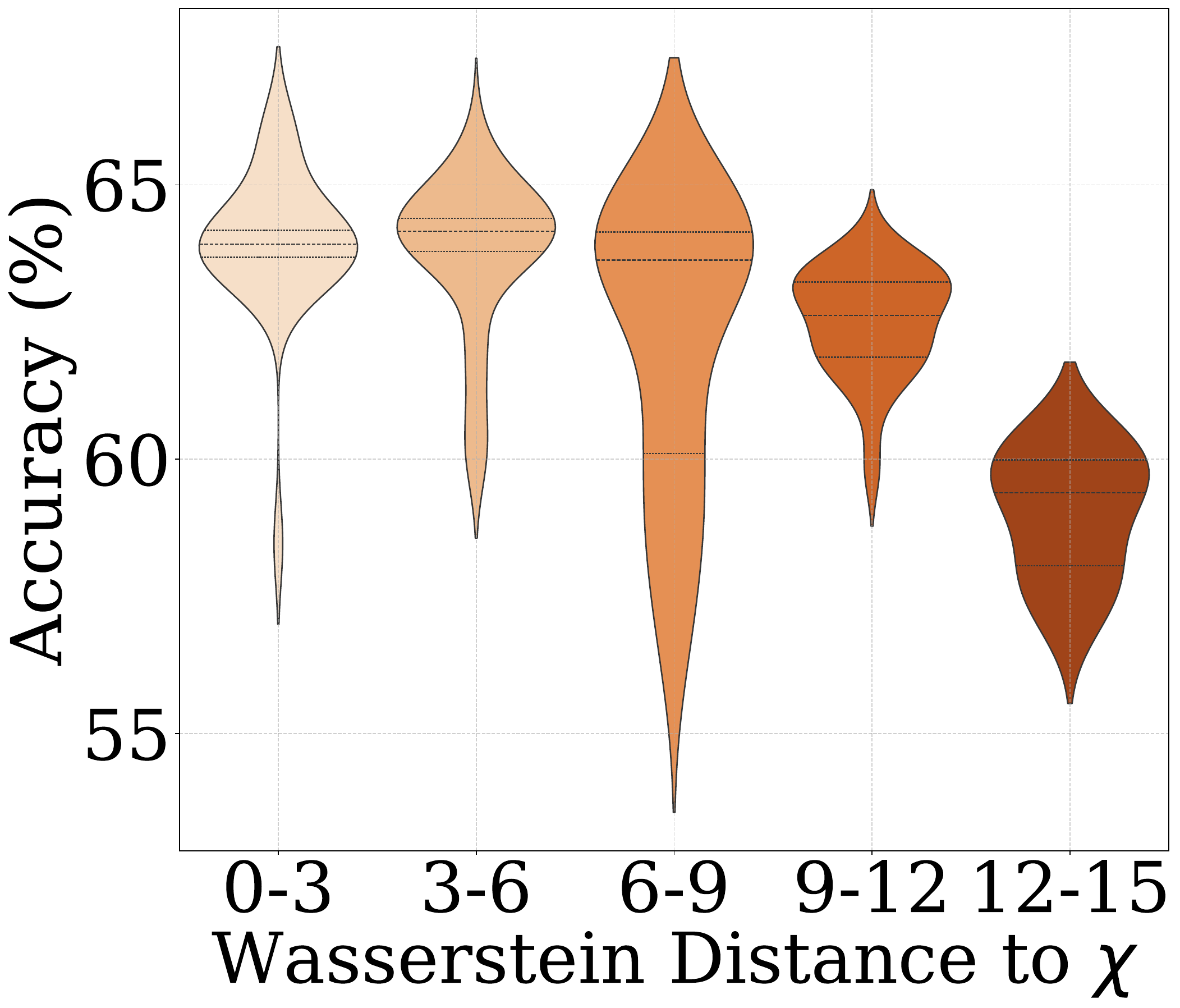}
        \captionsetup{justification=centering}
        \caption{Correlation between $\chi$-match and accuracy.}
        \label{fig:emp_correlation}
    \end{subfigure}
    \caption{\textbf{Radial-VICReg enforces a chi-distributed radius after optimization, and there exists a correlation between classification accuracy and the quality of the chi-distribution matching.} 
    (a) The feature norm distribution at random initialization with Wasserstein distance $W_1$ to the Chi distribution $\chi$ equal to $17.15$. (b) Feature norm distribution under the VICReg loss is far away from the Chi distribution. (c) Representations learned with Radial-VICReg is closely matching the Chi distribution density function. 
    (d) Across hyperparameter sweeps, validation accuracy increases as the radii distribution better matches the $\chi$-distribution as measured by lower Wasserstein distance.}
    \label{fig:empirical_results}
\end{figure*}

\subsection{Variance-Invariance-Covariance Regularization (VICReg)}

VICReg \citep{bardes2022vicregvarianceinvariancecovarianceregularizationselfsupervised} is a non-contrastive self-supervised learning method that contains the variance, invariance, and covariance loss terms. For a feature matrix $\mathbf{Z}\in\mathbb{R}^{N\times d_{\text{out}}}$, we denote the $i$-th row as $\mathbf{z}_i\in\mathbb{R}^{d_{\text{out}}}$ and the $j$-th column as $\mathbf{z}^j\in\mathbb{R}^{N}$.
The variance loss is given by $\smash{v(\mathbf{Z})=\frac{1}{d_{\text{out}}}\sum_{j=1}^{d_{\text{out}}}\max(0,\gamma-\sqrt{\text{Var}(\mathbf{z}^j)+\epsilon})}$, where $\gamma$ is typically fixed at $1$.
The invariance loss, computed as the mean squared error between $\mathbf{Z}$ and $\mathbf{Z'}$, is given by $\smash{s(\mathbf{Z},\mathbf{Z'})=\frac{1}{N}\sum_{i=1}^{N}\|\mathbf{z}_i-\mathbf{z'}_i\|_2^2}$.
This term encourages positive pairs to have similar representations. Let the empirical covariance matrix for the feature matrix $\mathbf{Z}$ be $\smash{C(\mathbf{Z})=\frac{1}{N-1}\sum_{i=1}^{N}(\mathbf{z}_i-\bar{\mathbf{z}})(\mathbf{z}_i-\bar{\mathbf{z}})^\top}$, where $\smash{\bar{\mathbf{z}}=\frac{1}{N}\sum_{i=1}^{N}\mathbf{z}_i}$ is the empirical mean. The covariance loss is defined as $\smash{c(\mathbf{Z})=\frac{1}{d_{\text{out}}}\sum_{i\neq j}[C(\mathbf{Z})]_{i,j}^2}$. Combining the three terms, we arrive at the VICReg formulation:
\begin{align}
    \mathcal{L}_{\text{VICReg}}(\mathbf{Z},\mathbf{Z'})=\lambda_{1}s(\mathbf{Z},\mathbf{Z'})+\lambda_{2}[v(\mathbf{Z})+v(\mathbf{Z'})]+\lambda_{3}[c(\mathbf{Z})+c(\mathbf{Z'})]
\end{align}
where the variance and covariance losses are applied to $\mathbf{Z}$ and $\mathbf{Z'}$ separately. When $\lambda_1=0$, we call it VCReg.

\subsection{Radial-VICReg}

Let $\|\mathbf{z}\|_2$ be the norm (or radius) of the feature vector $\mathbf{z}$ with density $p_{\boldsymbol{\theta}}(\|\mathbf{z}\|_2)$. Our goal is to minimize the Kullback–Leibler divergence between $p_{\boldsymbol{\theta}}(\|\mathbf{z}\|_2)$ and the Chi-distribution $p_\chi(\|\mathbf{z}\|_2)$:
\begin{align}
    \min_{\boldsymbol{\theta}}D_{KL}\bigg(p_{\boldsymbol{\theta}}(\|\mathbf{z}\|_2)\mathbin{\bigg\Vert}p_\chi(\|\mathbf{z}\|_2)\bigg)=\underbrace{\mathbb{E}_{\|\mathbf{z}\|_2\sim p_{\boldsymbol{\theta}}(\|\mathbf{z}\|_2)}[-\log p_\chi(\|\mathbf{z}\|_2)]}_{\text{Cross-Entropy}}-\underbrace{H(p_{\boldsymbol{\theta}}(\|\mathbf{z}\|_2))}_{\text{Entropy}}
\end{align}
where the cross entropy term is approximated using the Monte Carlo estimate:
\begin{align}
    \mathbb{E}_{\|\mathbf{z}\|_2\sim p_{\boldsymbol{\theta}}(\|\mathbf{z}\|_2)}[-\log p_\chi(\|\mathbf{z}\|_2)]&=\mathbb{E}\bigg[\underbrace{\left(\frac{d}{2}-1\right)\log 2+\log \Gamma(\frac{d}{2})}_{\text{constants}}+\frac{\|\mathbf{z}\|_2^2}{2}-(d-1)\log \|\mathbf{z}\|_2\bigg]\\
    &\approx \frac{\beta_1}{N}\sum_{i=1}^{N}\bigg(\frac{1}{2}\|\mathbf{z}_i\|_2^2-(d_{\text{out}}-1)\log\|\mathbf{z}_i\|_2\bigg)+C
\end{align}

with a tunable hyperparameter $\beta_1$ and an irrelevant constant $C$. The entropy term can also be computed using the m-spacing estimator \citep{9af226aa-e9c0-3287-a347-a1b20e828702,learned2003ica}:
\begin{align}
    H(p_{\boldsymbol{\theta}}(\|\mathbf{z}\|_2))\approx\frac{\beta_2}{N-m}\sum_{i=1}^{N-m}\log\bigg(\frac{N+1}{m}\bigg(\|\mathbf{z}_{(i+m)}\|_2-\|\mathbf{z}_{(i)}\|_2\bigg)\bigg)
\end{align}
where $\beta_2$ is a tunable hyperparameter, $m$ is the spacing hyperparameter, and $\|\mathbf{z}_{(1)}\|_2\leq\|\mathbf{z}_{(2)}\|_2\leq\cdots\leq\|\mathbf{z}_{(N)}\|_2$ are the ordered samples of the set $\{\|\mathbf{z}_{i}\|_2\}_{i=1}^{N}$. 
We refer to the composition of the cross-entropy and entropy loss as the radial Gaussianization loss $r(\mathbf{Z};\beta_1,\beta_2)$. 
\begin{align}
r(\mathbf{Z};\beta_1,\beta_2)&=\tfrac{\beta_1}{N}\!\sum_{i=1}^N\!\big(\tfrac12\|\mathbf z_i\|_2^2-(d_{\text{out}}-1)\log\|\mathbf z_i\|_2\big)\\&-\tfrac{\beta_2}{N-m}\!\sum_{i=1}^{N-m}\!\log\!\Big(\tfrac{N+1}{m}(\|\mathbf z_{(i+m)}\|_2-\|\mathbf z_{(i)}\|_2)\Big)
\end{align}
By the Law of Large Numbers, the cross-entropy estimator is consistent. \citet{9af226aa-e9c0-3287-a347-a1b20e828702} also shows that the m-spacing estimator is consistent. If $\beta_1$ and $\beta_2$ are both set to 1, the radial Gaussianization loss is a consistent estimator of the true Kullback–Leibler divergence between $p_{\boldsymbol{\theta}}(\|\mathbf{z}\|_2)$ and the Chi distribution $p_\chi(\|\mathbf{z}\|_2)$ up to constant offsets, as it is a linear combination of two consistent estimators. When $\beta_1$ and $\beta_2$ are not equal to 1, the loss no longer corresponds exactly to the KL divergence; instead, it yields a weighted variant of the objective, similar in spirit to the modification introduced in $\beta$-VAE \citep{higgins2017beta}.

In practice, we apply this term to both $\mathbf{Z}$ and $\mathbf{Z'}$, resulting in the Radial-VICReg loss:
\begin{align}
    \mathcal{L}_{\text{Radial-VICReg}}(\mathbf{Z},\mathbf{Z'})=\mathcal{L}_{\text{VICReg}}(\mathbf{Z},\mathbf{Z'})+r(\mathbf{Z};\beta_1,\beta_2)+r(\mathbf{Z'};\beta_1,\beta_2)
\end{align}

We notice that sometimes it's useful to include a multiplicative term $1/d_{\text{out}}$ for the cross entropy term, but we view this as absorbed in the $\beta_1$ hyperparameter. The goal of radial Gaussianization can also be achieved with other optimization objectives. We defer the details on alternative loss constructions to Appendix~\ref{appendix:wsdistradialgauss}.

In Proposition~\ref{lemma:coverage}, we show that the set of distributions Gaussianizable by Radial-VCReg (with $\lambda_1=0$) strictly contains that of VCReg. (See Appendix~\ref{proof:proofofcoverage} for proofs.) Thus, we interpret the radial Gaussianization term as enforcing a necessary—but not sufficient—condition for Gaussianity.

\begin{proposition}\label{lemma:coverage}
Let $\mathbf{X}$ be a random vector in $\mathbb{R}^d$ with distribution $P_{\mathbf{X}}$. Define the VCReg map and Radial-VCReg map as
\begin{align}
    T_{\mathrm{VCReg}}(\mathbf{x}) &= \boldsymbol{\Sigma}^{-1/2} (\mathbf{x} - \boldsymbol{\mu}) \\
    T_{\mathrm{Radial\text{-}VCReg}}(\mathbf{x}) &= \frac{\boldsymbol{\Sigma}^{-1/2} (\mathbf{x} - \boldsymbol{\mu})}{\|\boldsymbol{\Sigma}^{-1/2} (\mathbf{x} - \boldsymbol{\mu})\|_2} \, F_{\chi}^{-1}\Big(F_{\|\boldsymbol{\Sigma}^{-1/2} (\mathbf{x} - \boldsymbol{\mu})\|_2}(\|\boldsymbol{\Sigma}^{-1/2} (\mathbf{x} - \boldsymbol{\mu})\|_2)\Big)
\end{align}
where $\boldsymbol{\mu} = \mathbb{E}[\mathbf{X}]$, $\boldsymbol{\Sigma} = \mathrm{Cov}[\mathbf{X}]$, $F_{\|\boldsymbol{\Sigma}^{-1/2} (\mathbf{x} - \boldsymbol{\mu})\|_2}$ is the CDF of the radial component of the whitened random vector, and $F_\chi^{-1}$ is the inverse CDF of the $\chi(d)$ distribution. We denote the pushforward measure by $T_{\mathrm{VCReg}\#} P_{\mathbf{X}}$ and $T_{\mathrm{Radial\text{-}VCReg}\#} P_{\mathbf{X}}$. Let $\mathcal{F}_{\mathrm{VCReg}}=\{P_{\mathbf{X}}: T_{\mathrm{VCReg}\#} P_{\mathbf{X}}=\mathcal{N}(\mathbf{0},\mathbf{I})\}$ and $\mathcal{F}_{\mathrm{Radial\text{-}VCReg}}=\{P_{\mathbf{X}}: T_{\mathrm{Radial\text{-}VCReg}\#} P_{\mathbf{X}}=\mathcal{N}(\mathbf{0},\mathbf{I})\}$ be sets of distributions that can be Gaussianized by the VCReg map and the Radial-VCReg map respectively. Then $\mathcal{F}_{\mathrm{VCReg}} \subsetneq \mathcal{F}_{\mathrm{Radial\text{-}VCReg}}$.
\end{proposition}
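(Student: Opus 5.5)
We prove the two parts of the claim separately: first the inclusion $\mathcal{F}_{\mathrm{VCReg}} \subseteq \mathcal{F}_{\mathrm{Radial\text{-}VCReg}}$, then that it is strict. Throughout, write $\mathbf{Y} = \boldsymbol{\Sigma}^{-1/2}(\mathbf{X}-\boldsymbol{\mu})$ for the whitened vector and $R = \|\mathbf{Y}\|_2$ for its radius. With this notation the Radial-VCReg map reads $T_{\mathrm{Radial\text{-}VCReg}}(\mathbf{x}) = \frac{\mathbf{Y}}{R}\, F_\chi^{-1}(F_R(R))$. We assume $\boldsymbol{\Sigma}$ is nonsingular, as the statement implicitly does. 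We also assume that $R$ has a continuous distribution with $\mathbf{Y}\neq \mathbf{0}$ almost surely, so that $F_R(R)\sim\mathrm{Unif}(0,1)$ by the probability integral transform.

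\emph{Inclusion.} Take $P_{\mathbf{X}}\in\mathcal{F}_{\mathrm{VCReg}}$, so $\mathbf{Y}\sim\mathcal{N}(\mathbf{0},\mathbf{I})$. Then $R\sim\chi(d)$, hence $F_R = F_\chi$. Since $F_\chi$ is continuous and strictly increasing on $(0,\infty)$, we get $F_\chi^{-1}(F_R(r)) = r$ for every $r>0$. Therefore $T_{\mathrm{Radial\text{-}VCReg}}(\mathbf{x}) = \mathbf{Y}$ almost surely, its pushforward is $\mathcal{N}(\mathbf{0},\mathbf{I})$, and so $P_{\mathbf{X}}\in\mathcal{F}_{\mathrm{Radial\text{-}VCReg}}$.

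\emph{Strictness.} We need a distribution that the Radial-VCReg map Gaussianizes but the VCReg map does not. We use a spherically symmetric non-Gaussian law: let $\mathbf{U}$ be uniform on $S^{d-1}$, let $S$ be an independent positive radius with a continuous law, and set $\mathbf{X} = c\,S\,\mathbf{U}$. A concrete choice is $S\sim\mathrm{Unif}(0,1)$. The steps are as follows.
\begin{enumerate}
\item By symmetry $\boldsymbol{\mu}=\mathbf{0}$ and $\boldsymbol{\Sigma} = c^2\,\mathbb{E}[S^2]\,\mathbf{I}/d$. Choose $c$ so that $\boldsymbol{\Sigma}=\mathbf{I}$; then $\mathbf{Y}=\mathbf{X}$.
\item The VCReg pushforward is the law of $\mathbf{X}$ itself. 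Its radius is $cS$, which is bounded, so the law is not $\mathcal{N}(\mathbf{0},\mathbf{I})$. Hence $P_{\mathbf{X}}\notin\mathcal{F}_{\mathrm{VCReg}}$.
\item For the Radial-VCReg map, $\mathbf{Y}/R = \mathbf{U}$ remains uniform on the sphere. The new radius $F_\chi^{-1}(F_R(R))$ is $\chi(d)$-distributed, by the probability integral transform followed by inverse-CDF sampling. It is a function of $S$ alone, so it is independent of $\mathbf{U}$.
\item The product of a uniform direction and an independent $\chi(d)$ radius is exactly the polar decomposition of $\mathcal{N}(\mathbf{0},\mathbf{I})$. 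Hence the pushforward equals $\mathcal{N}(\mathbf{0},\mathbf{I})$ and $P_{\mathbf{X}}\in\mathcal{F}_{\mathrm{Radial\text{-}VCReg}}$.
\end{enumerate}

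The main obstacle is mostly bookkeeping rather than a genuine difficulty. Continuity of $F_R$ is needed for the probability integral transform to yield an exact uniform. The independence of direction and radius has to be justified carefully; it is a standard fact for isotropic Gaussians, proved from rotation invariance of the density. With these points, the argument depends only on elementary facts about the $\chi$ distribution and spherical symmetry.
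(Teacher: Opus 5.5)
Your proof is correct, and it is organized more economically than the paper's. The paper first proves two characterizations: $\mathcal{F}_{\mathrm{VCReg}}$ is exactly the Gaussian family, and $\mathcal{F}_{\mathrm{Radial\text{-}VCReg}}$ is exactly the elliptically symmetric family. It then reads off the proposition from the existence of non-Gaussian elliptically symmetric laws. The Gaussianization step is delegated to its lemma that a spherically symmetric density with $\chi(d)$ radius is $\mathcal{N}(\mathbf{0},\mathbf{I})$.

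You instead prove the inclusion directly. When $R\sim\chi(d)$, the correction $F_\chi^{-1}\circ F_R$ is the identity on $(0,\infty)$, so the two maps agree almost surely on $\mathcal{F}_{\mathrm{VCReg}}$. You then settle strictness with one explicit witness, checked via the probability integral transform and the polar decomposition of the standard Gaussian. That polar-decomposition step is the paper's lemma in density-free form. The paper's route buys a conceptual description of what Radial-VCReg Gaussianizes, including a converse the proposition does not need; yours buys precision.

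Your insistence on a continuous radius is not cosmetic. Take the paper's first suggested witness, the uniform distribution on a sphere. There the whitened radius is constant, so $F_R(R)=1$ almost surely and $F_\chi^{-1}(1)=\infty$, and the map does not even return a finite vector. So the paper's claim that every elliptically symmetric law lies in $\mathcal{F}_{\mathrm{Radial\text{-}VCReg}}$ tacitly needs your continuity condition; its Student-$t$ example is fine.

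Finally, you need not present continuity of $R$ as a standing assumption. It holds automatically in both places you use it: $R\sim\chi(d)$ in the inclusion, and $R=cS$ with $S\sim\mathrm{Unif}(0,1)$ in the example. So your argument proves the proposition as stated, given nonsingular $\boldsymbol{\Sigma}$.
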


\section{Synthetic Data Experiments}\label{sec:syntheticexperimentsec}

To test whether Radial-VCReg encourages Gaussianity, we construct the $\mathrm{X}$-distribution in 2D Euclidean space as shown in Figure~\ref{fig:syntheticsubfig1}. Although it has identity covariance, minimizing variance and covariance losses, the distribution is not elliptically symmetric and exhibits higher-order dependencies.

We apply gradient descent over samples from the $\mathrm{X}$-distribution by differentiating the Radial-VCReg loss with respect to the sampled points. In Figure~\ref{fig:syntheticsubfig2}, we show the final samples after $200000$ training steps. The resulting points spread spherically and resemble standard normal samples (Figure~\ref{fig:syntheticsubfig2}). We further measure the Wasserstein distance between optimized samples from a mixture $\alpha \mathrm{X}+(1-\alpha)\mathcal{N}(\mathbf{0},\mathbf{I})$ and $\mathcal{N}(\mathbf{0},\mathbf{I})$. As $\alpha$ increases, Radial-VCReg consistently produces samples closer to Gaussian than standard VCReg (Figure~\ref{fig:syntheticsubfig3}). Thus, even though the $\mathrm{X}$-distribution is not elliptically symmetric, the added radial Gaussianization term can push the samples closer to a Gaussian distribution. We also provide additional details and experiments in Appendix~\ref{appendix:syndistsec}.

\section{Empirical Results}

\begin{table}[ht!]
     \caption{\textbf{CIFAR-100 Results (Linear Probes).} The table reports the mean $\pm$ standard deviation for Top-1 and Top-5 accuracies, with the two metrics separated by a forward slash (/). All results were averaged over multiple random seeds. Hyperparameter details are provided in Appendix~\ref{appendix:tablecifar100maindetails}.}
     \label{tab:cifar100-main}
     \centering
     \begin{tabular}{llcc}
        \toprule
             & & \multicolumn{2}{c}{Projector Dimension ($d$)} \\
        \cmidrule(lr){3-4}
        Architecture & Method & 512 & 2048 \\
        \midrule
        ResNet18
        & Radial-VICReg
             & $\mathbf{65.99 \pm 0.08}$ / $\mathbf{89.28 \pm 0.21}$
             & $\mathbf{68.25 \pm 0.41}$ / $90.61 \pm 0.23$ \\
        & VICReg
             & $64.23 \pm 0.10$ / $88.32 \pm 0.10$
             & $67.99 \pm 0.27$ / $\mathbf{90.78 \pm 0.05}$ \\
        \midrule
        ViT
        & Radial-VICReg
             & $\mathbf{61.33 \pm 0.29}$ / $\mathbf{87.36 \pm 0.28}$
             & $\mathbf{62.91 \pm 0.20}$ / $\mathbf{88.11 \pm 0.37}$ \\
        & VICReg
             & $60.30 \pm 0.21$ / $86.68 \pm 0.05$
             & $62.28 \pm 0.33$ / $87.97 \pm 0.31$ \\
        \bottomrule
     \end{tabular}
     \vspace{3pt}
\end{table}

\begin{table}[ht!]
  \caption{\textbf{ImageNet-10 Results (Linear Probes).} The table reports the mean ± standard deviation for Top-1 and Top-5 accuracies, which are separated by a forward slash (/). All results were averaged over multiple random seeds. Hyperparameter details can be found in Appendix~\ref{appendix:tableimagenet10maindetails}.}
  \label{tab:imagenet10-table}
  \centering
  \small
  \setlength{\tabcolsep}{5pt}
  \renewcommand{\arraystretch}{1.15}
  \resizebox{\linewidth}{!}{%
  \begin{tabular}{llll}
    \toprule
    Projector Dimension & 512 & 2048 & 8192 \\
    \midrule
    Radial-VICReg & $\mathbf{94.73 \pm 0.58} / \mathbf{99.27 \pm 0.12}$ &
                   $\mathbf{93.93 \pm 0.31} / 99.07 \pm 0.12$ &
                   $\mathbf{93.33 \pm 0.70} / \mathbf{99.47 \pm 0.23}$ \\
    VICReg       & $93.20 \pm 0.69 / 99.07 \pm 0.31$ &
                   $93.53 \pm 0.23 / \mathbf{99.47 \pm 0.31}$ &
                   $\mathbf{93.33 \pm 1.55} / 99.20 \pm 0.00$ \\
    \bottomrule
  \end{tabular}}
  \vspace{3pt}
\end{table}

\begin{table}[ht!]
  \caption{\textbf{CIFAR-100 MLP Probe Results.} The table reports the mean and standard deviation for Top-1 and Top-5 accuracies, which are separated by a forward slash (/). All results were averaged over multiple random seeds, and the experimental settings are identical to those in Table~\ref{tab:cifar100-main}.}
  \label{tab:cifar100-mlp}
  \centering
  \begin{tabular}{lll}
    \toprule
    \cmidrule(r){1-2}
    Projector Dimension & 512 & 2048 \\
    \midrule
    Radial-VICReg
      & $\mathbf{64.11 \pm 0.14}$ / $\mathbf{86.88 \pm 0.14}$
      & $\mathbf{66.33 \pm 0.33}$ / $88.05 \pm 0.10$ \\
    VICReg
      & $62.30 \pm 0.34$ / $85.58 \pm 0.14$
      & $65.81 \pm 0.16$ / $\mathbf{88.09 \pm 0.42}$ \\
    \bottomrule
  \end{tabular}
  \vspace{3pt}
  \footnotesize
\end{table}

To evaluate Radial-VICReg, we pretrain networks with $512$-dimensional outputs and an MLP projector on CIFAR-100 and ImageNet-10, reporting results in Table~\ref{tab:cifar100-main}, \ref{tab:imagenet10-table}. Radial-VICReg consistently outperforms VICReg by about $1.5\%$ on both datasets for smaller projector dimensions like $512$, with gains holding across ResNet18 and ViT backbones. The improvements remain stable under MLP probing (Table~\ref{tab:cifar100-mlp}), suggesting that radial Gaussianization enhances representations rather than exploiting linear probes. Figures~\ref{fig:emp_initial}, \ref{fig:emp_vicreg}, and \ref{fig:emp_Radial-VICReg} show that the added radial term shifts radius distributions toward the Chi distribution, while Figure~\ref{fig:emp_correlation} illustrates that closer alignment with Chi correlates with higher accuracy. We also observe improvements on CelebA for multi-label attribute prediction (Appendix~\ref{appendix:additionalresultsforcelebea}); further experimental details are in Appendix~\ref{appendix:experimentaldetails}.

\section{Conclusion}

We introduced Radial-VCReg, a self-supervised method that augments VCReg with a radial Gaussianization loss to align feature norms with a Chi distribution. This extension pushes a broader class of distributions toward Gaussianity than VCReg alone, as shown theoretically and on synthetic data. Experiments on real-world image datasets confirm that the radial term consistently improves performance. While not sufficient for perfect Gaussianity, it highlights the value of higher-order constraints in learning more diverse and informative representations.

\newpage

\begin{ack}

We thank Alfredo Canziani and Ying Wang for helpful discussions and anonymous reviewers for helpful feedback. This work was supported in part by AFOSR under grant FA95502310139, NSF Award 1922658, and Kevin Buehler's gift. This work was also supported through the NYU IT High Performance Computing resources, services, and staff expertise.

\end{ack}

\bibliographystyle{plainnat}   
\bibliography{unireps_cr}      

@misc{chen2020simpleframeworkcontrastivelearning,
      title={A Simple Framework for Contrastive Learning of Visual Representations}, 
      author={Ting Chen and Simon Kornblith and Mohammad Norouzi and Geoffrey Hinton},
      year={2020},
      eprint={2002.05709},
      archivePrefix={arXiv},
      primaryClass={cs.LG},
      url={https://arxiv.org/abs/2002.05709}, 
}

@article{radford2018improving,
  title={Improving language understanding by generative pre-training},
  author={Radford, Alec and Narasimhan, Karthik and Salimans, Tim and Sutskever, Ilya and others},
  year={2018},
  publisher={San Francisco, CA, USA}
}

@misc{hjelm2019learningdeeprepresentationsmutual,
      title={Learning deep representations by mutual information estimation and maximization}, 
      author={R Devon Hjelm and Alex Fedorov and Samuel Lavoie-Marchildon and Karan Grewal and Phil Bachman and Adam Trischler and Yoshua Bengio},
      year={2019},
      eprint={1808.06670},
      archivePrefix={arXiv},
      primaryClass={stat.ML},
      url={https://arxiv.org/abs/1808.06670}, 
}

@misc{ozsoy2022selfsupervisedlearninginformationmaximization,
      title={Self-Supervised Learning with an Information Maximization Criterion}, 
      author={Serdar Ozsoy and Shadi Hamdan and Sercan Ö. Arik and Deniz Yuret and Alper T. Erdogan},
      year={2022},
      eprint={2209.07999},
      archivePrefix={arXiv},
      primaryClass={cs.LG},
      url={https://arxiv.org/abs/2209.07999}, 
}

@misc{zbontar2021barlowtwinsselfsupervisedlearning,
      title={Barlow Twins: Self-Supervised Learning via Redundancy Reduction}, 
      author={Jure Zbontar and Li Jing and Ishan Misra and Yann LeCun and Stéphane Deny},
      year={2021},
      eprint={2103.03230},
      archivePrefix={arXiv},
      primaryClass={cs.CV},
      url={https://arxiv.org/abs/2103.03230}, 
}

@misc{bardes2022vicregvarianceinvariancecovarianceregularizationselfsupervised,
      title={VICReg: Variance-Invariance-Covariance Regularization for Self-Supervised Learning}, 
      author={Adrien Bardes and Jean Ponce and Yann LeCun},
      year={2022},
      eprint={2105.04906},
      archivePrefix={arXiv},
      primaryClass={cs.CV},
      url={https://arxiv.org/abs/2105.04906}, 
}

@InProceedings{pmlr-v139-ermolov21a,
  title = 	 {Whitening for Self-Supervised Representation Learning},
  author =       {Ermolov, Aleksandr and Siarohin, Aliaksandr and Sangineto, Enver and Sebe, Nicu},
  booktitle = 	 {Proceedings of the 38th International Conference on Machine Learning},
  pages = 	 {3015--3024},
  year = 	 {2021},
  editor = 	 {Meila, Marina and Zhang, Tong},
  volume = 	 {139},
  series = 	 {Proceedings of Machine Learning Research},
  month = 	 {18--24 Jul},
  publisher =    {PMLR},
  url = 	 {https://proceedings.mlr.press/v139/ermolov21a.html}
}

@article{sobal2025learning,
  title={Learning from reward-free offline data: A case for planning with latent dynamics models},
  author={Sobal, Vlad and Zhang, Wancong and Cho, Kynghyun and Balestriero, Randall and Rudner, Tim GJ and LeCun, Yann},
  journal={arXiv preprint arXiv:2502.14819},
  year={2025}
}

@book{cover1991elements,
  title     = {Elements of Information Theory},
  author    = {Cover, Thomas M. and Thomas, Joy A.},
  year      = {1991},
  publisher = {John Wiley \& Sons},
  address   = {New York},
  isbn      = {978-0471062592}
}

@ARTICLE{Lyu08c,
     TITLE= "Nonlinear extraction of 'Independent Components'
     of natural images using radial {Gaussianization}",
     AUTHOR= "S Lyu and E P Simoncelli",
     JOURNAL= "Neural Computation",
     VOLUME= 21,
     NUMBER= 6,
     PAGES= "1485--1519",
     MONTH= "Jun",
     YEAR= "2009",
     DOI= "10.1162/neco.2009.04-08-773",
     PMID= 19191599,
     PMCID= "PMC3120963",
     PDF-URL= "https://www.cns.nyu.edu/pub/lcv/lyu08c-reprint.pdf",
}

@inproceedings{NIPS2008_da4fb5c6,
 author = {Lyu, Siwei and Simoncelli, Eero},
 booktitle = {Advances in Neural Information Processing Systems},
 editor = {D. Koller and D. Schuurmans and Y. Bengio and L. Bottou},
 pages = {},
 publisher = {Curran Associates, Inc.},
 title = {Reducing statistical dependencies in natural signals using radial Gaussianization},
 url = {https://proceedings.neurips.cc/paper_files/paper/2008/file/da4fb5c6e93e74d3df8527599fa62642-Paper.pdf},
 volume = {21},
 year = {2008}
}

@misc{chakraborty2025improvingpretrainedselfsupervisedembeddings,
      title={Improving Pre-trained Self-Supervised Embeddings Through Effective Entropy Maximization}, 
      author={Deep Chakraborty and Yann LeCun and Tim G. J. Rudner and Erik Learned-Miller},
      year={2025},
      eprint={2411.15931},
      archivePrefix={arXiv},
      primaryClass={cs.LG},
      url={https://arxiv.org/abs/2411.15931}, 
}

@book{vershynin2018high,
  title={High-dimensional probability: An introduction with applications in data science},
  author={Vershynin, Roman},
  volume={47},
  year={2018},
  publisher={Cambridge university press}
}

@book{shrinkage18,
  title={Shrinkage estimation},
  author={Fourdrinier, Dominique and Strawderman, William E and Wells, Martin T},
  year={2018},
  publisher={Springer}
}

@article{9af226aa-e9c0-3287-a347-a1b20e828702,
 ISSN = {00359246},
 URL = {http://www.jstor.org/stable/2984828},
 author = {Oldrich Vasicek},
 journal = {Journal of the Royal Statistical Society. Series B (Methodological)},
 number = {1},
 pages = {54--59},
 publisher = {[Royal Statistical Society, Oxford University Press]},
 title = {A Test for Normality Based on Sample Entropy},
 urldate = {2025-08-19},
 volume = {38},
 year = {1976}
}

@inproceedings{liu2015faceattributes,
  title = {Deep Learning Face Attributes in the Wild},
  author = {Liu, Ziwei and Luo, Ping and Wang, Xiaogang and Tang, Xiaoou},
  booktitle = {Proceedings of International Conference on Computer Vision (ICCV)},
  month = {December},
  year = {2015} 
}

@article{vallender1974,
  author  = {Vallender, S. S.},
  title   = {Calculation of the Wasserstein Distance Between Probability Distributions on the Line},
  journal = {Theory of Probability \& Its Applications},
  year    = {1974},
  volume  = {18},
  number  = {4},
  pages   = {784}
}

@misc{bachman2019learningrepresentationsmaximizingmutual,
      title={Learning Representations by Maximizing Mutual Information Across Views}, 
      author={Philip Bachman and R Devon Hjelm and William Buchwalter},
      year={2019},
      eprint={1906.00910},
      archivePrefix={arXiv},
      primaryClass={cs.LG},
      url={https://arxiv.org/abs/1906.00910}, 
}

@article{learned2003ica,
  title={ICA using spacings estimates of entropy},
  author={Learned-Miller, Erik G and others},
  journal={Journal of machine learning research},
  volume={4},
  number={Dec},
  pages={1271--1295},
  year={2003}
}

@inproceedings{higgins2017beta,
  title={beta-vae: Learning basic visual concepts with a constrained variational framework},
  author={Higgins, Irina and Matthey, Loic and Pal, Arka and Burgess, Christopher and Glorot, Xavier and Botvinick, Matthew and Mohamed, Shakir and Lerchner, Alexander},
  booktitle={International conference on learning representations},
  year={2017}
}

\newpage

\section{Additional Background}\label{appendix:additionalbackground}

In this section, we review key concepts related to information maximization in self-supervised learning. 

\paragraph{Mutual Information} Self-supervised learning can be viewed as maximizing the mutual information $I(Z;Z')$ between different views $Z$ and $Z'$ of the same input \citep{bachman2019learningrepresentationsmaximizingmutual}. By definition, $I(Z;Z')=H(Z)+H(Z')-H(Z,Z')$ where $H$ is the entropy function. During training, we would like to minimize the joint entropy $H(Z,Z')$ and maximize the marginal entropies $H(Z)$ and $H(Z')$. In general, it's difficult to directly maximize the marginal entropy due to the curse of dimensionality. 

\paragraph{Maximum Entropy Distribution} Even if it's hard to maximize entropy in general, some distributions are maximum entropy by default. Given a fixed mean and variance, the Gaussian distribution is the maximum entropy distribution when compared to all other distributions with support over $[-\infty,\infty]$ \citep{cover1991elements}. This fact also extends to high-dimensional cases. In the context of representation learning, maximizing the entropy of the output feature distribution is crucial to preventing representational collapse, where the model learns to map all inputs to a single, trivial point.

\paragraph{Elliptically Symmetric Density (ESD)}

Given a random vector $\mathbf{x}$ in $d$ dimension with a zero mean, we say that its density $p_X$ is elliptically symmetric if it has the following form:
\begin{align}
    p_X(\mathbf{x})=c\cdot f\bigg(-\frac{1}{2}\mathbf{x}^\top \mathbf{\Sigma}^{-1}\mathbf{x}\bigg)
\end{align}
where $c$ is the normalization constant, $\mathbf{\Sigma}$ is a positive definite matrix, and $f(\cdot)\geq0$ and $\int_{0}^{\infty}f(-r^2/2)r^{d-1}\mathrm{d}r<\infty$ \citep{NIPS2008_da4fb5c6}. When $\mathbf{\Sigma}$ is the covariance matrix and is a scalar multiple of the identity matrix (i.e., $\mathbf{\Sigma}=\sigma^2\mathbf{I}$), the density function is said to be spherically symmetric. A key property of ESDs is that they can always be transformed into a spherically symmetric density by applying whitening (i.e., making the covariance matrix the identity).

In practice, it's difficult to Gaussianize high-dimensional output features without making assumptions. In the following lemma, we provide a sufficient condition for a Gaussian density that relates to the family of elliptically symmetric densities.

\begin{lemma}\label{lemma:gaussianizationviaSSDandradial}
    If $\mathbf{x}$ is a random vector in $d$ dimensions with a spherically symmetric density and the random variable $\|\mathbf{x}\|_2$ follows the Chi distribution $\chi(d)$ with $d$ degrees of freedom, then the density function $p(\mathbf{x})=\mathcal{N}(\mathbf{0},\mathbf{I}_d)$.
\end{lemma}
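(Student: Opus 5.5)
The plan is to use polar coordinates. A spherically symmetric density in $d$ dimensions is determined entirely by its radial marginal. Since the $\chi(d)$ law is exactly the radial marginal of $\mathcal{N}(\mathbf{0},\mathbf{I}_d)$, the two densities must coincide.

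Concretely, spherical symmetry means $p(\mathbf{x}) = g(\|\mathbf{x}\|_2)$ for some nonnegative function $g$ on $[0,\infty)$. This is the case $\boldsymbol{\Sigma}=\sigma^2\mathbf{I}$ of the ESD form $c\, f(-\tfrac12 \mathbf{x}^\top\boldsymbol{\Sigma}^{-1}\mathbf{x})$, with the constants absorbed into $g$. I will compute the density of $R=\|\mathbf{x}\|_2$ via the polar change of variables $\mathbf{x}=r\boldsymbol{\omega}$, where $r>0$ and $\boldsymbol{\omega}\in S^{d-1}$, with volume element $\mathrm{d}\mathbf{x}=r^{d-1}\,\mathrm{d}r\,\mathrm{d}\sigma(\boldsymbol{\omega})$. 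For any Borel set $A\subseteq[0,\infty)$,
\begin{align}
\Pr(R\in A)=\int_A\int_{S^{d-1}} g(r)\, r^{d-1}\,\mathrm{d}\sigma(\boldsymbol{\omega})\,\mathrm{d}r = \int_A |S^{d-1}|\, g(r)\, r^{d-1}\,\mathrm{d}r ,
\end{align}
so $p_R(r)=|S^{d-1}|\,r^{d-1}g(r)$, where $|S^{d-1}|=2\pi^{d/2}/\Gamma(d/2)$. The same computation also shows that the direction $\boldsymbol{\omega}$ is uniform on the sphere and independent of $R$.

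By hypothesis $p_R=p_\chi$, where
\begin{align}
p_\chi(r)=\frac{r^{d-1}e^{-r^2/2}}{2^{d/2-1}\Gamma(d/2)} .
\end{align}
Equating the two expressions for $p_R$ on $r>0$ and dividing by $r^{d-1}$ gives
\begin{align}
g(r)=\frac{\Gamma(d/2)}{2\pi^{d/2}}\cdot\frac{e^{-r^2/2}}{2^{d/2-1}\Gamma(d/2)}=(2\pi)^{-d/2}e^{-r^2/2}
\end{align}
for almost every $r$. Therefore $p(\mathbf{x})=(2\pi)^{-d/2}e^{-\|\mathbf{x}\|_2^2/2}$ almost everywhere, which is the $\mathcal{N}(\mathbf{0},\mathbf{I}_d)$ density.

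The only delicate point is justifying the polar-coordinate factorization and the fact that "spherically symmetric density" means dependence on $\|\mathbf{x}\|_2$ alone. This is handled by Fubini together with the standard polar integration formula. The conclusion holds almost everywhere, which suffices for equality of distributions.

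A cleaner alternative avoids densities altogether. Write $\mathbf{x}=R\,\boldsymbol{\omega}$ with $\boldsymbol{\omega}$ uniform on the sphere and independent of $R$, as spherical symmetry implies. A standard Gaussian admits the same representation with $R\sim\chi(d)$. The law of $\mathbf{x}$ is determined by the joint law of $(R,\boldsymbol{\omega})$, which is a product measure with identical factors in both cases, so the two laws agree.
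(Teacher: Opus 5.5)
Your proof is correct and follows the paper's argument essentially line for line. You write $p(\mathbf{x})=g(\|\mathbf{x}\|_2)$, use the polar volume element $r^{d-1}\,\mathrm{d}r\,\mathrm{d}\Omega$ with $|S^{d-1}|=2\pi^{d/2}/\Gamma(d/2)$ to get the radial density $|S^{d-1}|\,r^{d-1}g(r)$, and equate it with the $\chi(d)$ density to obtain $g(r)=(2\pi)^{-d/2}e^{-r^2/2}$. Your almost-everywhere caveat and the density-free alternative via the representation $\mathbf{x}=R\boldsymbol{\omega}$ (uniform direction, independent of $R$) are both sound refinements that the paper does not spell out.
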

\begin{proof}\label{proof:proofofgaussianizationviaSSDandradial}
    From Theorem 4.2 in \citet{shrinkage18}, we know that the density function for spherically symmetric density only depends on the norm, i.e. $p(\mathbf{x})=g(\|\mathbf{x}\|_2)$. Let $r=\|\mathbf{x}\|_2$ be the radius. Our goal is to show that $p(\mathbf{x})=g(r)=\mathcal{N}(\mathbf{0},\mathbf{I}_d)$.

    It's well known that the infinitesimal volume element $d\mathbf{x}$ in spherical coordinate is given by $d\mathbf{x}=r^{d-1}drd\Omega_d$ where $\Omega_d$ is the surface measure of a unit sphere $\mathbb{S}^{d-1}$. It's shown in \citet{shrinkage18} that the surface measure of a unit sphere is 
    \begin{align}
        \Omega_d(\mathbb{S}^{d-1})=\int_{\mathbb{S}^{d-1}}d\Omega_d=\frac{2\pi^{d/2}}{\Gamma(d/2)}
    \end{align}
    Thus the probability distribution can be computed with this new measure
    \begin{align}
        P(\mathbf{x}\in \mathbb{R}^d)&=\int_{\mathbb{R}^d} p(\mathbf{x})d\mathbf{x}\\
        &= \int_0^\infty\int_{\mathbb{S}^{d-1}}p(\mathbf{x})r^{d-1}drd\Omega_d\\
        &= \int_0^\infty\int_{\mathbb{S}^{d-1}}g(r)r^{d-1}drd\Omega_d\\
        &= \int_0^\infty g(r)r^{d-1}\bigg(\int_{\mathbb{S}^{d-1}}d\Omega_d\bigg)dr\\
        &=\int_0^\infty \frac{2\pi^{d/2}}{\Gamma(d/2)}g(r)r^{d-1}dr\\
    \end{align}
    Since we marginalize out the angular components, we can define the density for the radial component $r$ to be
    \begin{align}
        p_\chi(r)=\frac{2\pi^{d/2}}{\Gamma(d/2)}g(r)r^{d-1}
    \end{align}
    However, we are also constraining $r$ to follow a Chi distribution $r\sim\chi(d)$ with $d$ degree of freedom. This gives us another expression for the radial marginal    
    \begin{align}
        p_\chi(r)=\frac{r^{d-1}}{2^{\frac{d}{2}-1}\Gamma(\frac{d}{2})}\exp(-\frac{r^2}{2})
    \end{align}
    We can combine these two expressions to compute $g(r)$ as follows
    \begin{align}
        g(r)&=\frac{p_\chi(r)\Gamma(d/2)}{2\pi^{d/2}r^{d-1}}\\
        &=\frac{\frac{r^{d-1}}{2^{\frac{d}{2}-1}\Gamma(\frac{d}{2})}\exp(-\frac{r^2}{2})\Gamma(d/2)}{2\pi^{d/2}r^{d-1}}\\
        &=\frac{1}{(2\pi)^{\frac{d}{2}}}\exp(-\frac{r^2}{2})\\
        &=\frac{1}{(2\pi)^{\frac{d}{2}}}\exp(-\frac{\|\mathbf{x}\|^2}{2})\\
        &=\mathcal{N}(\mathbf{x};\mathbf{0},\mathbf{I})
    \end{align}
    Thus we have shown that any random vector with spherically symmetric density and Chi-distributed radius with $d$ degree of freedom has to be the standard multivariate normal distribution $\mathcal{N}(\mathbf{0},\mathbf{I}_d)$.
\end{proof}

Lemma~\ref{lemma:gaussianizationviaSSDandradial} shows that we can transform any distribution from the ESD family into a standard Gaussian by ensuring two conditions are met: isotropic covariance (achieved through whitening) and a Chi-distributed radius. While real-world feature distributions are not guaranteed to be elliptically symmetric, there are cases where this transformation remains useful. We argue that imposing these two conditions serves as a necessary step towards optimizing for Gaussian features, which inherently maximize information content.

\section{Proofs of Proposition~\ref{lemma:coverage}}\label{proof:proofofcoverage}

\begin{proof}
    We would like to prove the following equivalent conditions first. 
    \begin{itemize}
        \item 1) $T_{\text{VCReg}\#} P_{\mathbf{X}} = \mathcal{N}(\mathbf{0}, \mathbf{I}) \iff P_{\mathbf{X}}$ is Gaussian, i.e., $P_{\mathbf{X}} = \mathcal{N}(\boldsymbol{\mu}, \boldsymbol{\Sigma})$.
        \item 2) $T_{\text{Radial-VCReg}\#} P_{\mathbf{X}} = \mathcal{N}(\mathbf{0}, \mathbf{I}) \iff P_{\mathbf{X}}$ is elliptically symmetric.   
    \end{itemize}
    We list the proofs below for claims 1) and 2). 
    
    \textbf{Claim 1). VCReg.}  

    ($\Rightarrow$). Since $T_{\text{VCReg}}(\mathbf{X})\sim\mathcal{N}(\mathbf{0}, \mathbf{I})$, we can write the random vector $\mathbf{X}$ via the affine map $\mathbf{X}=\boldsymbol{\Sigma}^{1/2}T_{\text{VCReg}}(\mathbf{X})+\boldsymbol{\mu}\sim\mathcal{N}(\boldsymbol{\mu}, \boldsymbol{\Sigma})$. Thus $P_{\mathbf{X}} = \mathcal{N}(\boldsymbol{\mu}, \boldsymbol{\Sigma})$.

    ($\Leftarrow$). We know that $\mathbf{X}\sim\mathcal{N}(\boldsymbol{\mu},\boldsymbol{\Sigma})$. Then the random vector $T_{\text{VCReg}}(\mathbf{X})=\boldsymbol{\Sigma}^{-1/2} (\mathbf{X} - \boldsymbol{\mu})\sim\mathcal{N}(\mathbf{0},\mathbf{I})$. Thus $T_{\text{VCReg}\#} P_{\mathbf{X}} = \mathcal{N}(\mathbf{0}, \mathbf{I})$.

    \medskip
    
    \textbf{Claim 2). Radial-VCReg.}  

    ($\Rightarrow$) We're given that $T_{\text{Radial-VCReg}\#} P_{\mathbf{X}} = \mathcal{N}(\mathbf{0}, \mathbf{I})$. Then $\mathbf{Z} = T_{\text{Radial-VCReg}}(\mathbf{X})$ is spherically symmetric. Let $\mathbf{Y} := \boldsymbol{\Sigma}^{-1/2} (\mathbf{X}-\boldsymbol{\mu})=r\cdot\boldsymbol{\Theta}$, where $r = \|\mathbf{Y}\|_2$ is the radius and $\boldsymbol{\Theta} = \mathbf{Y}/\|\mathbf{Y}\|_2$ is the angle. Note that $T_{\text{Radial-VCReg}}$ preserves angles and only modifies radius. Therefore, the angular component $\boldsymbol{\Theta}$ must be uniform and independent of $r$, which implies $\mathbf{Y}$ is spherically symmetric. Hence, $\mathbf{X} = \boldsymbol{\Sigma}^{1/2} \mathbf{Y} + \boldsymbol{\mu}$ is elliptically symmetric.

    ($\Leftarrow$) Suppose $P_{\mathbf{X}}$ is elliptically symmetric. Then $\mathbf{Y} = \boldsymbol{\Sigma}^{-1/2} (\mathbf{X}-\boldsymbol{\mu})$ is spherically symmetric. By Lemma~\ref{lemma:gaussianizationviaSSDandradial}, we know that $T_{\text{Radial-VCReg}\#} P_{\mathbf{X}} = \mathcal{N}(\mathbf{0}, \mathbf{I})$.

    Now given the equivalent conditions, we know that $\mathcal{F}_{\text{VCReg}}$ consists only of Gaussian distributions, whereas $\mathcal{F}_{\text{Radial-VCReg}}$ contains all elliptically symmetric distributions. Since there exist elliptically symmetric distributions that are \emph{not} Gaussian (e.g., uniform on a sphere or isotropic Student-$t$), we have $\mathcal{F}_{\text{VCReg}} \subsetneq \mathcal{F}_{\text{Radial-VCReg}}$.
\end{proof}

\section{Synthetic Distributions}\label{appendix:syndistsec}

\subsection{Sunshine Distribution}\label{appendix:sunshinedistsubsec}

\begin{figure*}[t!]
    \centering
    \begin{subfigure}[t]{0.33\linewidth} 
        \centering
        \includegraphics[width=\linewidth]{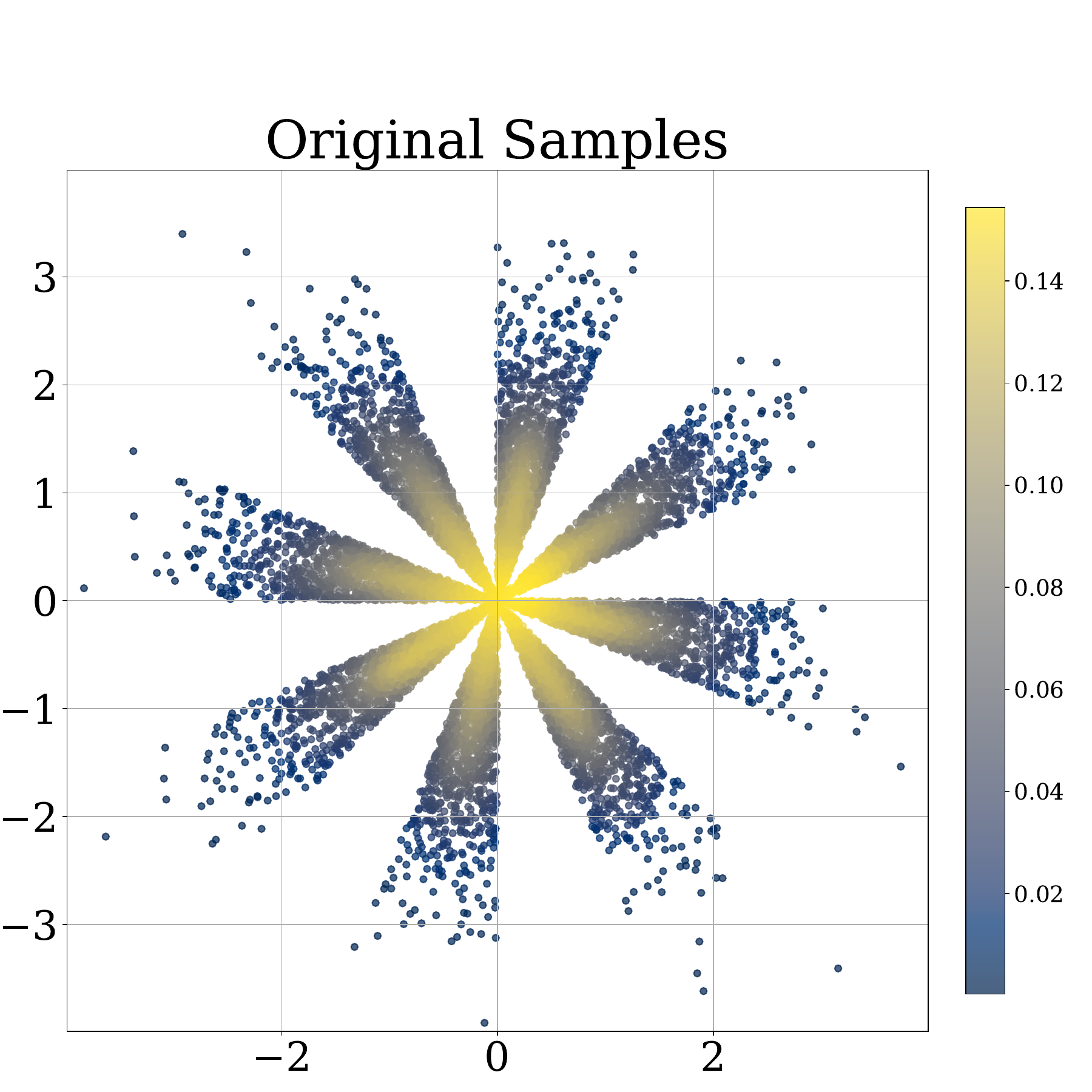}
        \captionsetup{justification=centering}
        \caption{Samples from the Sunshine Distribution.}
        \label{fig:sunshinefig1}
    \end{subfigure}
    \begin{subfigure}[t]{0.3\linewidth} 
        \centering
        \includegraphics[width=\linewidth]{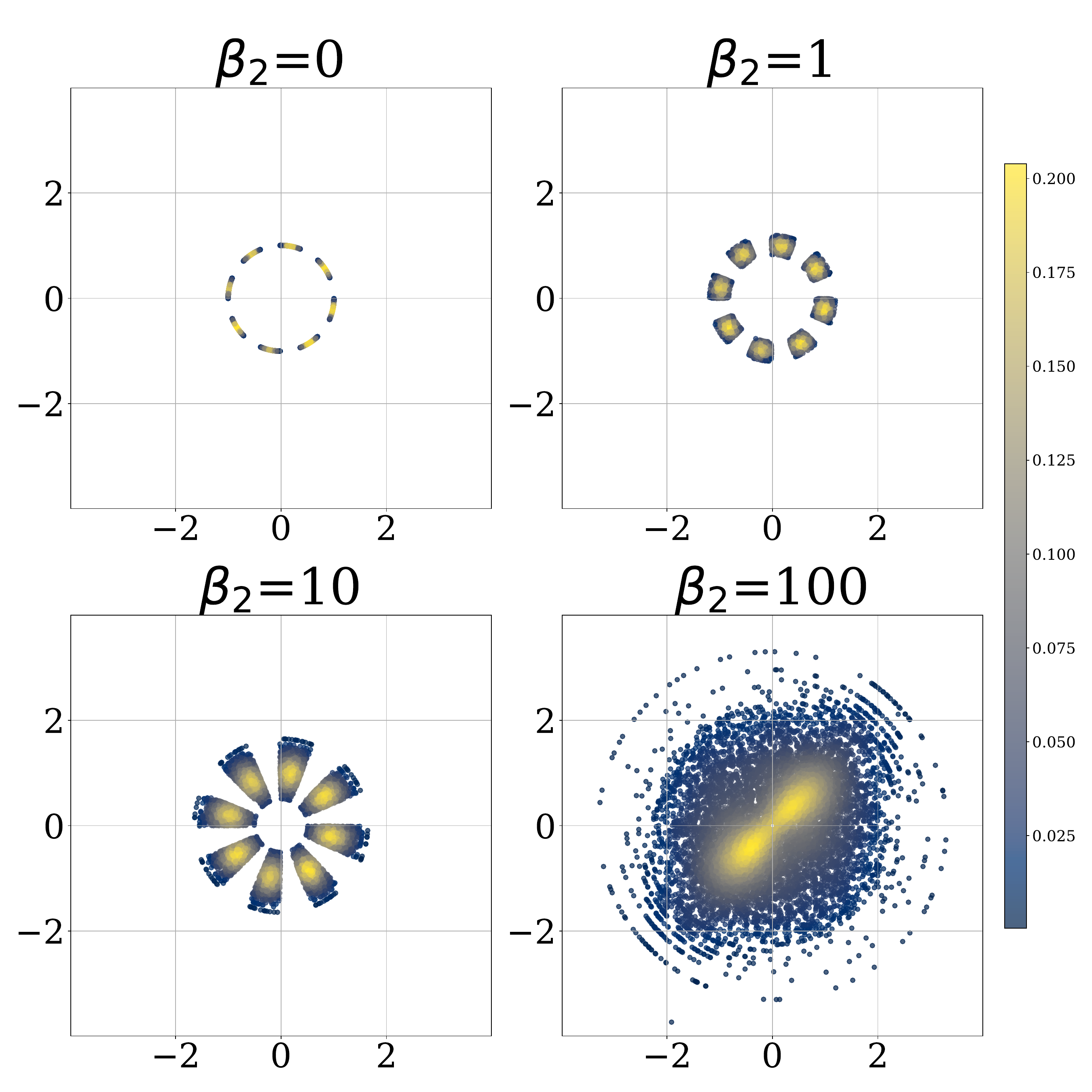}
        \captionsetup{justification=centering}
        \caption{Optimized samples with varying $\beta_2$ values.}
        \label{fig:sunshinefig2}
    \end{subfigure}
    \begin{subfigure}[t]{0.3\linewidth} 
        \centering
        \includegraphics[width=\linewidth]{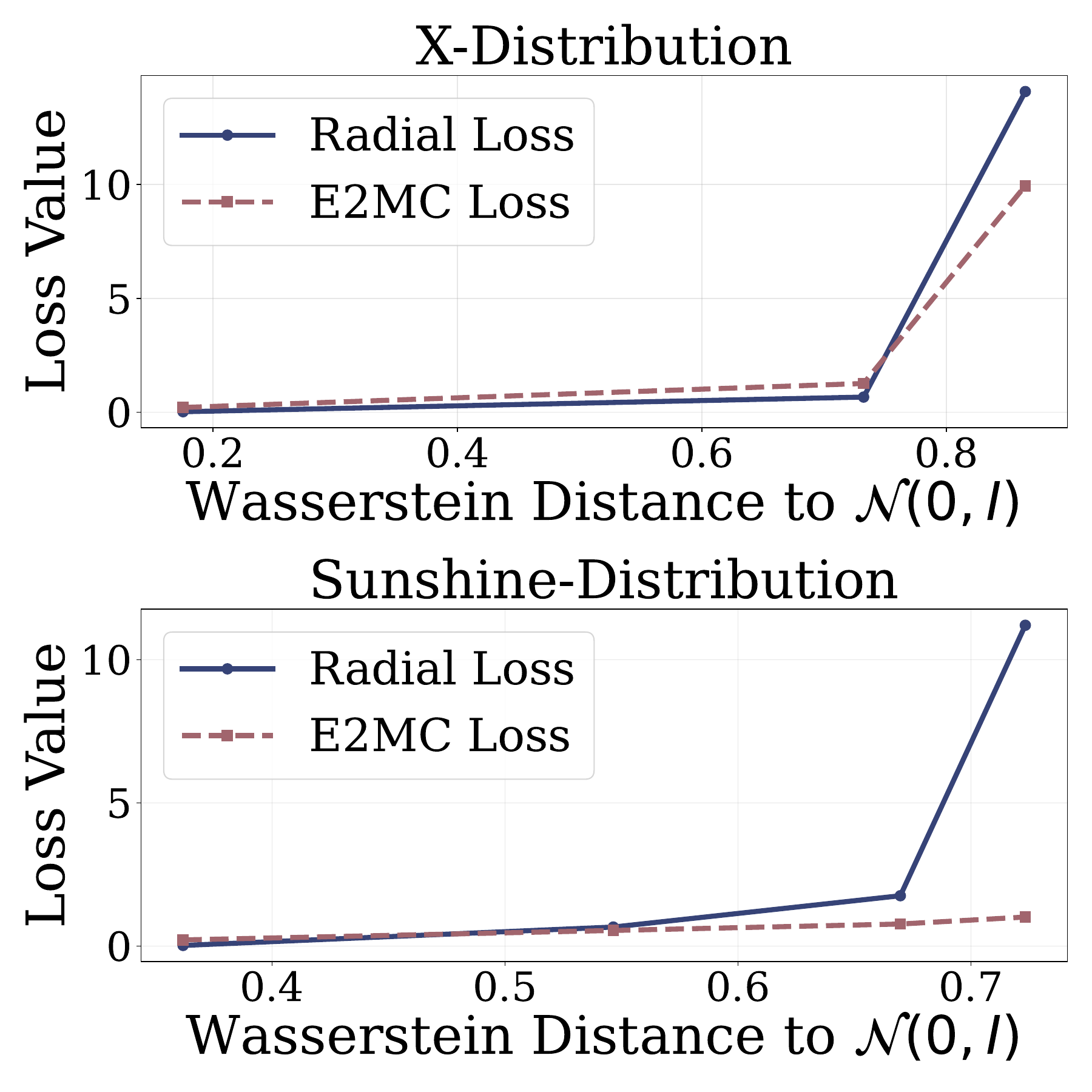}
        \captionsetup{justification=centering}
        \caption{Correlation between E2MC and Radial Gaussianization.}
        \label{fig:sunshinefig3}
    \end{subfigure}
    \caption{\textbf{There exist distributions that minimize the Radial-VCReg loss but are not Gaussian} (a) The sunshine distribution is built by first generating points from a 2D isotropic Gaussian distribution. These points are then converted to polar coordinates and sorted into a specified number of pie slices. Finally, every even-numbered slice is rotated clockwise, creating a distinctive pattern of segmented, rotated clusters. (b) As the weighting $\beta_2$ for the entropy term in the radial Gaussianization loss increases, samples are pushed towards the circle of radius $\sqrt{d-1}$. In $2$-dimensions, the radius is just $1$. (c) For both the X distribution and the Sunshine distribution, we observe a correlation between the E2MC loss and the radial Gaussianization loss. As both losses decreases, the optimized samples are also closer to a standard normal as measured by the Wasserstein distance.}
    \label{fig:sunshinefig}
\end{figure*}

There also exist non-ESD (elliptically symmetric) distributions that already minimize the Radial-VCReg loss but are not Gaussian. In Figure~\ref{fig:sunshinefig1}, we plot the sunshine distribution with an identity covariance matrix and chi-distributed radius. The final optimized samples using the Radial-VCReg objective are shown in Figure~\ref{fig:sunshinefig2} with varying weights for the radial entropy loss. Across hyperparameters, Radial-VCReg is unable to push samples from the sunshine distribution towards Gaussian. This illustrates that certain distributions cannot be fully Gaussianized by the Radial-VCReg objective. Nevertheless, the inclusion of the radial Gaussianization term expands the class of feature distributions that move toward Gaussianity compared to standard VCReg. 

In Figure~\ref{fig:sunshinefig3}, we explore to what extent the radial Gaussianization loss is related to E2MC \citep{chakraborty2025improvingpretrainedselfsupervisedembeddings}. We take samples from both the X distribution and the Sunshine distribution with Radial-VCReg optimization and log the corresponding E2MC loss. We find that minimizing the radial Gaussianization loss implicitly leads to a lower E2MC loss. The reduction in both losses also bring samples closer to a standard normal as measured by Wasserstein distances. Therefore, both Radial-VCReg and E2MC are effective proposals for reducing higher-order dependencies and achieving more Gaussian-like samples.

\subsection{Experimental Details}

For both the $\mathrm{X}$-distribution and the sunshine distribution, we utilized a dataset of $10,000$ samples for optimization. Training was performed using stochastic gradient descent (SGD) for $200,000$ steps with a linear warm-up and cosine-decay learning rate scheduler.

We performed a hyperparameter sweep over the following values:
\begin{itemize}
    \item \textbf{Mixture Weight ($\alpha$)}: $\{0.01, 0.25, 0.5, 0.75, 0.99\}$
    \item \textbf{Learning Rate}: $\{5 \times 10^{-1}, 5 \times 10^{-2}, 5 \times 10^{-3}, 5 \times 10^{-4}, 5 \times 10^{-5}\}$
    \item \textbf{Radial Gaussianization Parameters ($\beta_1, \beta_2$)}: $\{0, 0.1, 1, 10, 100\}$
    \item \textbf{VCReg Parameters ($\lambda_2, \lambda_3$)}: $\{1, 10, 25\}$
\end{itemize}

\section{Wasserstein Distance Formulation of the Radial Gaussianization Loss}\label{appendix:wsdistradialgauss}

\subsection{Approximating the Radial Chi Distribution: KL vs.\ Wasserstein}
\label{subsec:radial_kl_vs_w1}

Our radial objective is one–dimensional: given features $\mathbf{z}\in\mathbb{R}^{d_{\text{out}}}$ with radii $r=\|\mathbf{z}\|_2$, we seek to match the empirical radius distribution $p_{\theta}^r$ to the Chi distribution with $d_{\text{out}}$ degrees of freedom, denoted $\chi(d_{\text{out}})$.  
Two natural divergences for this one-dimensional matching are (i) a \emph{KL-based} loss, introduced in the main text, and (ii) a \emph{Wasserstein-1} loss, which we detail here.  

\paragraph{Wasserstein-1 (quantile) radial loss.}
For one-dimensional distributions, the Wasserstein distance is characterized by \citet{vallender1974}:
\begin{equation}
W_1\!\left(p_{\theta}^r, p_{\chi}\right)
= \int_{\mathbb{R}}\!\big|F_{\theta}^r(t)-F_{\chi}(t)\big|\,dt
= \int_0^1 \big| (F_{\theta}^r)^{-1}(u) - (F_{\chi})^{-1}(u) \big| \, du,
\label{eq:w1_defs}
\end{equation}
where $F$ denotes the cumulative distribution function.  
We use a simple, low-variance empirical estimator: given $K$ radii samples $\{r_i\}_{i=1}^{K}$ from the mini-batch and $K$ i.i.d.\ samples $\{u_i\}_{i=1}^{K}$ from $\chi(d_{\text{out}})$, we sort both sets and compute
\begin{equation}
\widehat{W}_1
\;=\;
\frac{1}{K}\sum_{i=1}^{K}\, \big| r_{(i)} - u_{(i)} \big|,
\qquad
\text{with } r_{(1)}\le\cdots\le r_{(K)}, \; u_{(1)}\le\cdots\le u_{(K)}.
\label{eq:w1_empirical}
\end{equation}
For two augmented views $\mathbf{Z},\mathbf{Z}'$, we sum their losses:
\[
\mathcal{L}_{\mathrm{W1}}(\mathbf{Z},\mathbf{Z}')=\widehat{W}_1(\{\|\mathbf{z}_i\|_2\},\chi(d_{\text{out}}))
+\widehat{W}_1(\{\|\mathbf{z}'_i\|_2\},\chi(d_{\text{out}})).
\]

We weight the radial Wasserstein term by a scalar $\gamma \ge 0$:
\begin{equation}
\label{eq:total_w1_loss}
\mathcal{L}_{\text{total}}(\mathbf{Z},\mathbf{Z}')
=
\underbrace{\lambda_{1}\,s(\mathbf{Z},\mathbf{Z}')
+ \lambda_{2}\,[v(\mathbf{Z}){+}v(\mathbf{Z}')]
+ \lambda_{3}\,[c(\mathbf{Z}){+}c(\mathbf{Z}')]}_{\mathcal{L}_{\text{VICReg}}(\mathbf{Z},\mathbf{Z}')}
\;+\;
\gamma\,\mathcal{L}_{\mathrm{W1}}(\mathbf{Z},\mathbf{Z}').
\end{equation}

The estimator in \cref{eq:w1_empirical} is differentiable almost everywhere (via the sort’s subgradient routing). Unlike the KL-based loss, however, the Wasserstein-1 estimator depends on the batch size: larger $K$ reduces quantile noise and yields sharper shape matching.  

\paragraph{Empirical comparison.}
In practice, we find that both KL and Wasserstein objectives optimize essentially the \emph{same} radial constraint.  
To illustrate this, we compare three cases: (a) optimization directly minimizing the Wasserstein-1 distance, (b) Radial-VICReg optimization using the KL-based radial Gaussianization loss, and (c) no optimization.  
The results are shown in Figure~\ref{fig:appendix_wasserstein}: Wasserstein-1 minimization achieves a distance of $0.310$ to the $\chi$ distribution, KL optimization achieves $0.792$, while the unoptimized baseline achieves a distance of $8.175$.  
\begin{figure*}[t]
    \centering
    \begin{subfigure}[t]{0.32\linewidth}
        \centering
        \includegraphics[width=\linewidth]{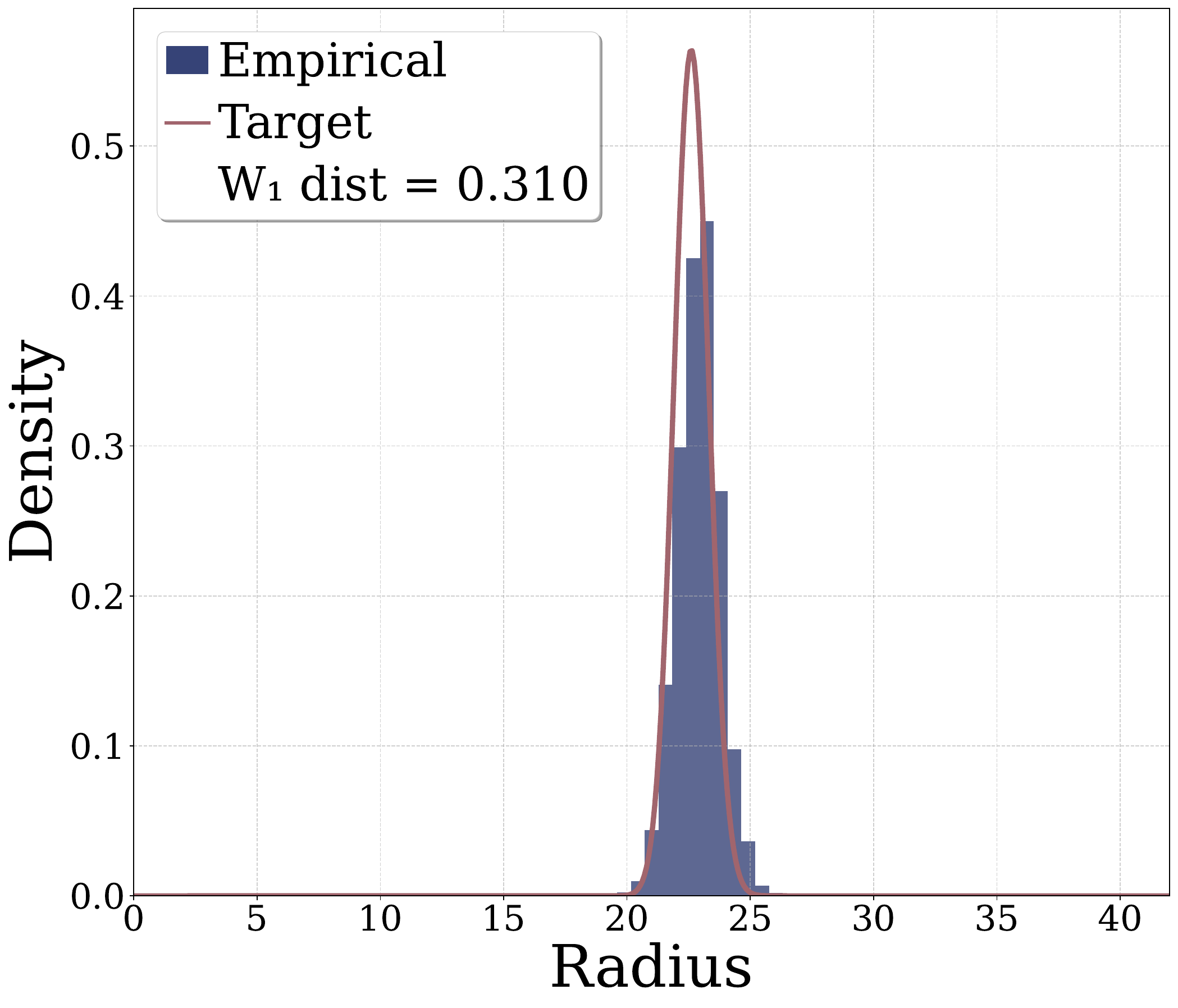}
        \captionsetup{justification=centering}
        \caption{Wasserstein-1 optimization. $W_1=0.310$; $\gamma=1.0$   batch-size=512 }
        \label{fig:appendix_wasserstein_sub1}
    \end{subfigure}
    \hfill
    \begin{subfigure}[t]{0.32\linewidth}
        \centering
        \includegraphics[width=\linewidth]{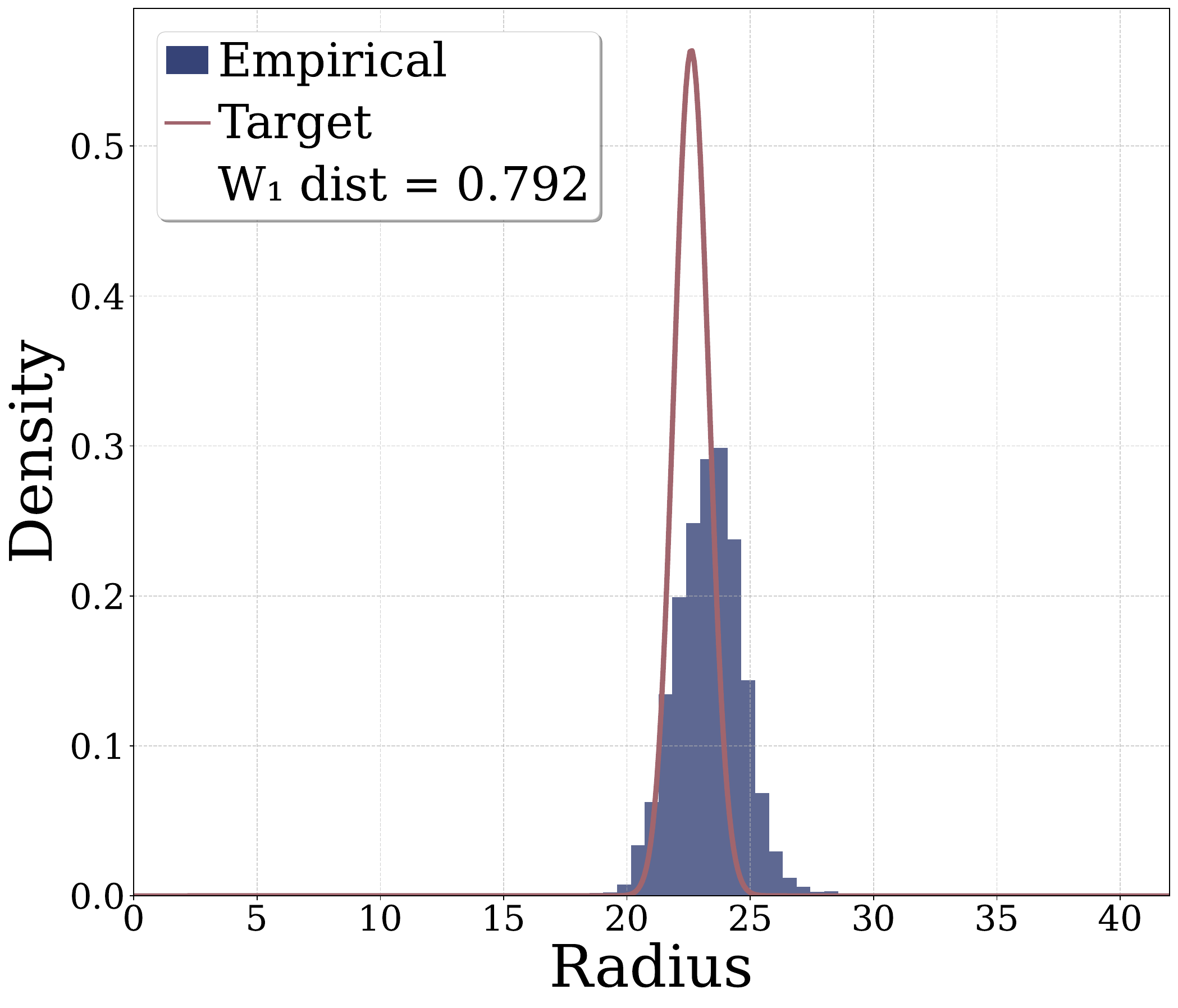}
        \captionsetup{justification=centering}
        \caption{KL Optimization $W_1=0.792$m; $\beta_1=100$ $\beta_2=0$.}
        \label{fig:appendix_wasserstein_sub2}
    \end{subfigure}
    \hfill
    \begin{subfigure}[t]{0.32\linewidth}
        \centering
        \includegraphics[width=\linewidth]{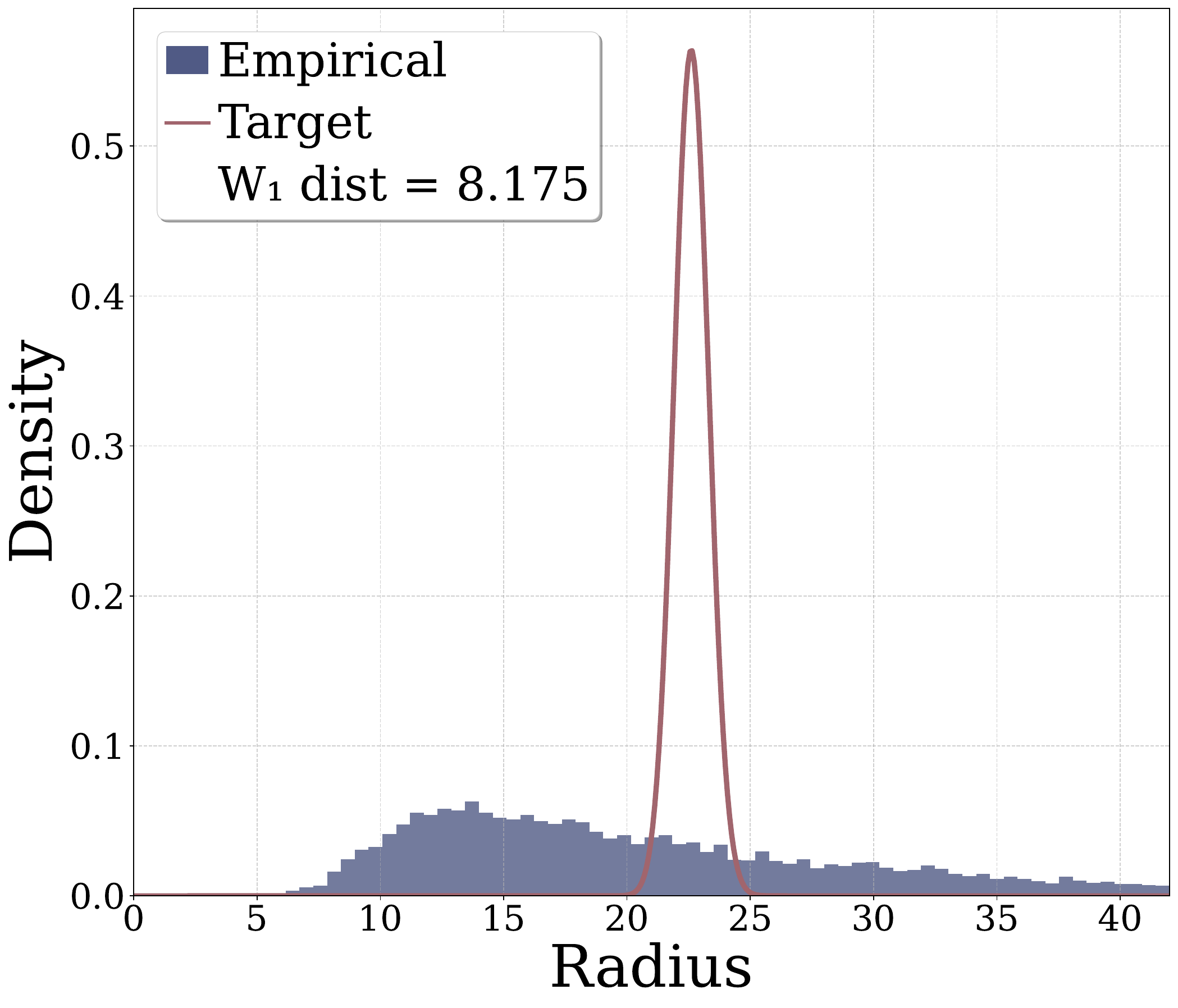}
        \captionsetup{justification=centering}
        \caption{No optimization. $W_1=8.175$.}
        \label{fig:appendix_wasserstein_sub3}
    \end{subfigure}
    \caption{\textbf{Radial Gaussianization aligns radii distributions with the $\chi$ distribution.}  
    Comparison of (a) direct Wasserstein-1 optimization, (b) Radial-VICReg optimization, and (c) no optimization.  
    Both Wasserstein-1 and Radial-VICReg push the empirical radii distribution closer to the target $\chi$ distribution, with Radial-VICReg achieving a substantial improvement over the unoptimized baseline.}
    \label{fig:appendix_wasserstein}
\end{figure*}

\section{Experimental Details}\label{appendix:experimentaldetails}

For hyperparameter sweeps, we varied the base learning rate $\{0.3, 0.03\}$, the cross-entropy (CE/rlw) weight $\{0, 1, 10, 100\}$, and the entropy (rlew) weight $\{0, 0.1, 0.3, 0.5, 0.75, 1.0\}$, each across three random seeds.  

\paragraph{CIFAR-100 (ResNet-18).} 
For all experiments on CIFAR-100 with ResNet-18, we trained the \texttt{radialvicreg} method using a three-layer MLP projector with dimensionality varying across settings. We applied standard image augmentations: random resized crops (scale range 0.2--1.0), color jitter (brightness 0.4, contrast 0.4, saturation 0.2, hue 0.1, applied with probability 0.8), random grayscale (probability 0.2), horizontal flips (probability 0.5), and solarization (probability 0.1). Gaussian blur and histogram equalization were disabled for CIFAR-100. Images were resized to $32 \times 32$, with two crops per image. Optimization used LARS with batch size 256, base learning rate (either 0.3 or 0.03 depending on sweep setting), classifier-head learning rate 0.1, weight decay $10^{-4}$, learning-rate clipping, $\eta = 0.02$, and bias/normalization parameters excluded from weight decay. We used a warmup cosine schedule for learning-rate annealing. Training ran for 400 epochs with mixed precision (\texttt{fp16}) and distributed data parallelism (\texttt{ddp}) across GPUs. The invariance, variance, and covariance loss weights were fixed at 25.0, 25.0, and 1.0, respectively.  

\paragraph{CIFAR-100 (ViT-Tiny/16).} 
We also trained a vision transformer variant using the \texttt{ViT-Tiny/16} architecture from \texttt{timm}, consisting of 12 transformer encoder layers with an embedding dimension of 192 and 3 attention heads per layer. For CIFAR-100, we adapted the patch size from 16 to 4 to accommodate $32 \times 32$ images, yielding $8 \times 8$ patches. The projector was configured with hidden and output dimensions of 2048. Optimization employed AdamW with a base learning rate of $5 \times 10^{-4}$ (and $5 \times 10^{-3}$ for the classifier head), batch size 256, weight decay $10^{-4}$, and a warmup cosine learning rate schedule. Training details otherwise matched the ResNet-18 CIFAR-100 setup.  

\paragraph{ImageNet-10 (ResNet-18).} 
For ImageNet-10, we used a ResNet-18 backbone with a three-layer MLP projector. Images were cropped to $224 \times 224$ and augmented with the same transformations as above, except that Gaussian blur (probability 0.5) was enabled. Optimization followed the CIFAR-100 ResNet-18 settings, except with batch size 128. Training was conducted for 400 epochs with synchronized batch normalization, mixed precision, and two GPUs.  

All experiments (on synthetic and image datasets) were run on NVIDIA V100, RTX8000, or A100 GPUs.  

\subsection{Table~\ref{tab:cifar100-main} Details}\label{appendix:tablecifar100maindetails}

\textit{ResNet-18.}  
Best Radial-VICReg hyperparameters on CIFAR-100:  
\begin{itemize}
    \item $d = 2048$: $\beta_1 = 1.0$, $\beta_2 = 0.10$, learning rate $=0.3$.
    \item $d = 512$: $\beta_1 = 100$, $\beta_2 = 0.0$, learning rate $=0.3$.
\end{itemize}
For VICReg, the best learning rates were $0.03$ at $d=512$ and $0.3$ at $d=2048$.  
These values were obtained from the sweep described above.

\textit{ViT-Tiny/16.}  
Best Radial-VICReg hyperparameters:  
\begin{itemize}
    \item $d = 512$: $\beta_1 = 100.0$, $\beta_2 = 0.0$.  
    \item $d = 2048$: $\beta_1 = 1.0$, $\beta_2 = 0.10$.  
\end{itemize}
For VICReg, both $\beta_1$ and $\beta_2$ are set to $0$.  

\subsection{Table~\ref{tab:imagenet10-table} Details}\label{appendix:tableimagenet10maindetails}

\textit{ResNet-18.}  
Best Radial-VICReg hyperparameters on ImageNet-10:  
\begin{itemize}
    \item $d = 512$: $\beta_1 = 100$, $\beta_2 = 0$.  
    \item $d = 2048$: $\beta_1 = 1$, $\beta_2 = 0.5$.  
    \item $d = 8192$: $\beta_1 = 0$, $\beta_2 = 0.1$.  
\end{itemize}

\section{Additional Results for CIFAR-100}\label{appendix:additionalcifarmlpproberesults}

In Figure~\ref{fig:sensetivity_to_hyper}, We show the sensitivity to hyperparameters for the Radial-VICReg objective.

\section{Additional Results for CelebA}\label{appendix:additionalresultsforcelebea}

\begin{table}[t]
  \caption{\textbf{CelebA Multi-Label Classification.} We compare standard VICReg with Radial-VICReg by ablating the cross entropy and entropy terms in the KL divergence for the Chi-distribution. Radial CE stands for only using the cross entropy term, and Radial ENT represents using the entropy term alone. Radial KL uses both with non-zero hyperparameter values for $\beta_1$ and $\beta_2$. }
  \label{tab:CelebA-table}
  \centering
  \begin{tabular}{lcccc}
    \toprule
    \multicolumn{1}{c}{} & \multicolumn{2}{c}{Encoder Linear Probe} & \multicolumn{2}{c}{Projector Linear Probe} \\
    \cmidrule(lr){2-3} \cmidrule(lr){4-5}
    Projector Dimension  & {512} & {2048} & {512} & {2048} \\
    \midrule
    VICReg               &    $62.29 \pm 0.49$           &   $65.93 \pm 0.35$            &      $62.88 \pm 0.58$         &     $\mathbf{67.50 \pm 0.39}$         \\
    VICReg + Radial CE   &     $\mathbf{63.37 \pm 0.89}$         &      $\mathbf{66.07 \pm 0.27}$         &    $\mathbf{64.33 \pm 0.70}$           &      $67.48 \pm 0.50$         \\
    VICReg + Radial ENT  &     $50.51 \pm 0.41$          &      $54.95 \pm 1.16$         &   $50.04 \pm 0.07$            &    $55.97 \pm 1.56$           \\
    VICReg + Radial KL   &    $62.40 \pm 0.45$           &     $66.00 \pm 0.31$          &     $62.76 \pm 0.47$          &     $66.54 \pm 0.52$          \\
    \bottomrule
  \end{tabular}
\end{table}

In Table~\ref{tab:CelebA-table}, we show the averaged multi-label attributes prediction performances over the CelebFaces Attributes Dataset (CelebA) \citep{liu2015faceattributes} for Radial-VICReg and VICReg. The hyperparameter settings are inherited from the CIFAR-100 experiments. In addition, we sweep the base learning rate in $\{(0.3, 0.03, 0.003)\}$ with linear probe learning rate $\{0.1, 0.01, 0.001\}$. For CelebA, we apply standard data augmentations commonly used in self-supervised learning. Each image is randomly resized and cropped to $128 \times 128$ pixels with scale sampled uniformly from $[0.5, 1.0]$, producing two views per image. We further apply color jittering (brightness/contrast $\pm 0.4$, saturation $\pm 0.2$, hue $\pm 0.1$) with probability $0.8$, random grayscale conversion with probability $0.2$, Gaussian blur with probability $0.5$, and horizontal flipping with probability $0.5$. Solarization and histogram equalization are disabled, as such transformations might distort facial structures and yield unnatural artifacts on human faces.

On average, we observe the most improvements from optimizing the cross entropy term alone in the radial Gaussianization loss. We notice that optimizing the entropy term alone actually leads to a performance degradation. This is expected since maximizing the entropy alone leads to unconstrained variance in the feature norm.

\begin{figure}
    \centering
    \includegraphics[width=1\linewidth]{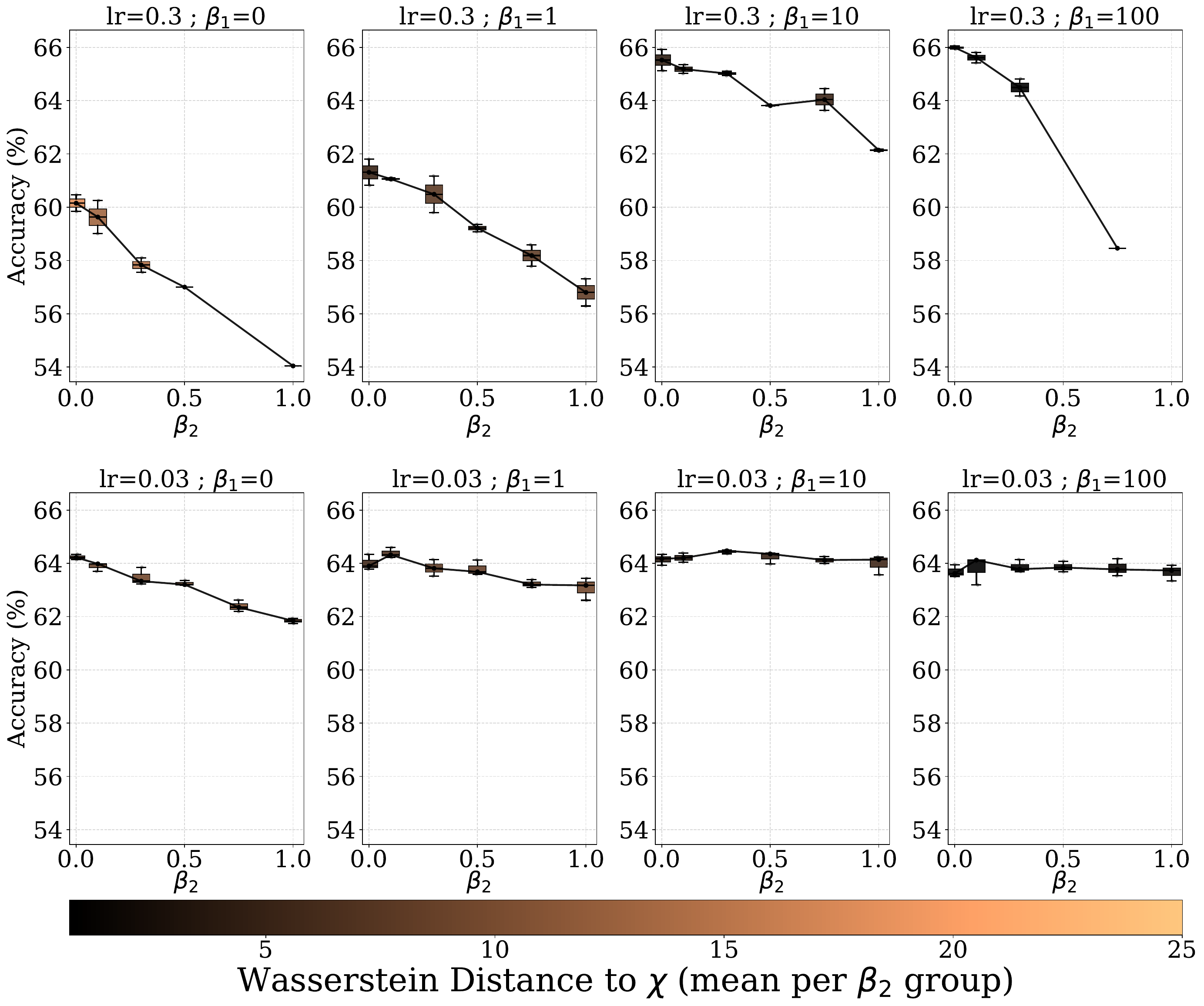}
    \caption{\textbf{The optimal performance of Radial-VICReg can be obtained with $\beta_1\neq\beta_2$, even if $\beta_1=\beta_2$ gives theoretically consistent estimator of the underlying KL divergence.} We observe that sometimes it's better to have $\beta_1>\beta_2$ for optimal performance in downstream tasks.}
    \label{fig:sensetivity_to_hyper}
\end{figure}


\end{document}